\DeclarePairedDelimiter\ceil{\lceil}{\rceil}
\DeclarePairedDelimiter\floor{\lfloor}{\rfloor}
\newcommand{\indep}{\perp \!\!\! \perp}
\newtheorem{theorem}{Theorem}
\newtheorem{corollary}{Corollary}
\newtheorem{lemma}{Lemma}[section]
\newtheorem{proposition}{Proposition}
\theoremstyle{remark}
\title{Golden Ratio-Based Sufficient Dimension Reduction
}
\author{
  Wenjing Yang \\
  School of Statistics \\
  University of Minnesota - Twin Cities \\
  Minneapolis, Minnesota, US \\
  \texttt{yang2987@umn.edu} \\
   \And
  Yuhong Yang \\
    School of Statistics \\
  University of Minnesota - Twin Cities  \\
  Minneapolis, Minnesota, US \\
  \texttt{yangx374@umn.edu} \\
}
\begin{document}
\maketitle

\begin{abstract}
Many machine learning applications deal with high dimensional data. To make computations feasible and learning more efficient, it is often desirable to reduce the dimensionality of the input variables by finding linear combinations of the predictors that can retain as much original information as possible in the relationship between the response and the original predictors. We propose a neural network based sufficient dimension reduction method that not only identifies the structural dimension effectively, but also estimates the central space well. It takes advantages of approximation capabilities of neural networks for functions in Barron classes and leads to reduced computation cost compared to other dimension reduction methods in the literature. Additionally, the framework can be extended to fit practical dimension reduction, making the methodology more applicable in practical settings.
\end{abstract}

\keywords{Barron classes, sufficient dimension reduction, neural networks, nonparametric regression}

\section{Introduction}
\label{ch3.1}
The curse of dimensionality poses significant challenges to statistical analysis when dealing with a large number of variables \cite{bellman1959}. Under the supervised learning framework, sufficient dimension reduction (SDR) has emerged as a useful tool that bridges the gap between high dimensionality and traditional statistical modeling. However, current state-of-the-art statistical methods for SDR in the literature often presume the structural dimension, which is generally not the case in practice. Additionally, these methods may not be computationally feasible for handling large sample sizes or high dimensionality efficiently, which limits their usage in many real-world applications. 

Numerous methods have been proposed on SDR in the past decades, see e.g., \cite{li2018sufficient}. For classical methods such as sliced inverse regression (SIR) \cite{li1991}, minimum average variance estimation (MAVE) method \cite{xia2002adaptive}, and sliced average variance estimation (SAVE) \cite{cook1991}, a main class of estimators for the central space is based on the inverse conditional moments of $X | Y$, where $Y \in \mathbb{R}$ and $X \in \mathbb{R}^p$ are the response and $p$-dimensional predictor in a regression analysis, respectively. Slicing the continuous response $y$ is often used to facilitate the estimation. However, this imposes some strong probabilistic structure on $X$. Moreover, selecting the number of slices remains an open and challenging question \cite{wang2020}. \cite{zhu2010dimension} proposed a cumulative slicing estimation methodology for SDR and developed three methods: cumulative mean estimation (CUME), cumulative directional regression (CUDR), and cumulative variance estimation (CUVE). \cite{cook2014fused} introduced a fused estimation procedure (fSIR), with observed performance improvement in some situations. \cite{cai2020online} implemented a sliced inverse regression in an online fashion that first constructs
an online estimate for the kernel matrix and then performs online singular value decomposition. \cite{lin2018consistency} established consistency of estimation of the dimension reduction space in a high-dimensional setting. 
 
In a more recent work, \cite{wang2020} proposed an aggregate inverse mean estimation (AIME) procedure that may substantially improve estimation accuracy compared to the previous methods. It incorporates the cumulative slicing scheme into the aggregate SDR idea proposed by \cite{wang2020aggregate} and is much less sensitive to linearity condition violations with the localization step before aggregation. \cite{artemiou2021} proposed a real-time approach for SDR that uses a principal least squares support vector machines approach to estimate the central subspace more accurately. Their method updates the estimator efficiently as new data is collected, starting from an initial estimator obtained with currently available data. \cite{soale2021} proposed a method that first estimates the expectiles through kernel expectile regression and then carries out dimension reduction based on random projections of the regression expectiles. Several methods in the literature are extended under these frameworks \cite{luo2022efficient, wang2022structured, pircalabelu2022high}. There are also neural network approaches to SDR for tackling classification problems \cite{zhang2019learning, teng2023level, meng2020sufficient}. For regression problems, \cite{liang2022nonlinear} proposed a nonlinear SDR method and \cite{kapla2022fusing} proposed a stochastic neural network that is computationally feasible to handle large scale data. \cite{kapla2022fusing} has proposed an algorithm that is able to obtain structural dimension, although no theoretical understanding is provided (different from this work). 

In this paper, we propose a golden ratio-based neural network for SDR (GRNN-SDR), a novel approach that utilizes neural networks to capture complex functional forms that are previously inaccessible with the traditional nonparametric regression tools. Our algorithm incorporates the golden ratio to dynamically search for the structural dimension, which significantly reduces computation time and complexity. Theoretical basis have demonstrated the generalization ability of multi-layer neural networks \cite{barron1994approximation, klusowski2018approximation}. Under proper conditions, we establish theoretical results that demonstrate that our approach leads to the true structural dimension with high probability. Compared to most of the existing methods, which typically presume the structural dimension to estimate the central space, extensive numerical results show that our proposed method is able to obtain the true or practical structural dimension effectively without prior knowledge. Extensive experiment comparisons show that our method estimate the central space with higher accuracy in most cases and demonstrate higher stability when the true dimensionality is not small. Furthermore, our algorithmic complexity under a fixed neural network structure is $O(N)$, where $N$ is the sample size, offering a promising solution to the challenges in SDR.

\section{Method}\label{ch3.3}
\subsection{Preliminary}\label{ch3.3.1}
Let $Y \in \mathbb{R}$ and $X \in \mathbb{R}^p$ be the response and a $p$-dimensional
predictor in a regression analysis, respectively. The goal of SDR
is to replace $X$ with a small set of linear combinations,
$\beta^{T}X$ ($\beta$ consists of $d$ columns, where $d \leq p$), without loss of its regression information \cite{cook1994using}. The response $Y$ depends on the predictor $X$ through $\beta^{T}X$, meaning that we want to find $\beta \in \mathbb{R}^{p \times d}$ with $d \leq p$ such that 
\begin{equation} 
Y \indep X|\beta^{T}X, 
\label{eq0}
\end{equation}
where $\indep$ represents statistical independence. If there exists a full column ranked $\beta$ satisfying (\ref{eq0}), and for all $w \in \mathbb{R}^{p \times d}$, if $rank(w) < d$ indicates $Y \indep X|w^{T}X$ does not hold, then the column space of $\beta$ is the central space for the regression $Y$ on $X$, denoted as $\boldsymbol{S}_{Y|X}$ and the dimension of the central space is called the structural dimension, denoted as $d_{Y|X} = d$ \cite{cook1994using}. If the distributions of $Y | X$ and $Y | \beta^{T}X$ are the same, then dimension reduction can be achieved. This suggests that for $Y = f(X) + \epsilon$, where $X$ and $\epsilon$ are independent with var($\epsilon$) = $\sigma^2$ and
E($\epsilon$) = 0, we intend to find $g(z)$ and $\beta$, s.t. $f(x) = g(\beta^{T}x)$. 

\subsection{Approximation Bound}\label{ch3.3.2}
Let $g_m(z, \theta) = \sum^m_{i=1}\tau_i\phi(u^T_iz+v_i) + \tau_0$ be a neural network of one hidden layer that is parameterized by $\theta =\{u_{i}, v_{i}, \tau_{i},\tau_0 \mid i=1,2,...m\}$ as the set of parameters, with $u_i \in \mathbb{R}^d$, $v_i$, $\tau_i$, and $\tau_0 \in \mathbb{R}$, for $i = 1,\cdots,m$, where $m \geq 1$ is the number of nodes in the hidden layer, and active function $\phi(\cdot)$ is a sigmoidal function that has bounded derivative on $\mathbb{R}$ and satisfies the Lipschitz condition, $|\phi(z) - \phi(z^*)| \leq a_1|z - z^*|$ for some $a_1 > 0$, and $|\phi(z)| \leq a_0$ where $a_0 \geq 1$. We first give some definitions below based on \cite{barron1994approximation}. Define the accuracy of an approximation $g_{m}(z, \theta)$ to the target function $g(z)$ in terms of $L_2$-norm for a probability measure $\mu$ on a closed and bounded set in $\mathbb{R}^d$ and contains the point $x=0$ as 
\begin{equation} 
\|g - g_{m}\| = \sqrt{\int|g(z) - g_{m}(z, \theta)|^2\mu(dz)}.
\end{equation}

\noindent Define the first absolute moment of the Fourier magnitude distribution for functions $g(z)$ on $\mathbb{R}^d$ with a Fourier representation of the form $g(z) = \int_{\mathbb{R}^d}e^{i2\pi \eta^Tz}\tilde{g}(\eta)d\eta$ as
\begin{align*} 
C_g = \int\|\eta\|_1|\tilde{g}(\eta)|d\eta,
\end{align*} 
where $\|\eta\|_1 = \sum^d_{j=1}|\eta_j|$ is the $l_1$-norm of $\eta \in \mathbb{R}^d$, and $\tilde{g}(\eta) = \int_{\mathbb{R}^d} e^{-i2\pi \eta^T z} g(z)dz$. The set of functions with $C_g$ bounded is called a Barron class, which provides an efficient representation via neural networks. 

Let $f_{k,m}(x,\theta,w)$ be a neural network of two hidden layers defined as 
\begin{equation}
    f_{k,m}(x,\theta,w) = \sum_{i=1}^{m}\tau_{i}\phi(u_{i}^{T}I(w^{T}x)+v_{i}) + \tau_0, \\
    \label{eq:2}
\end{equation}
where the number of nodes, $k$, in the first hidden layer ($k=p$ initially) is equal to the number of column vectors of $w= (w^{(1)}, \cdots, w^{(k)}) \in \mathbb{R}^{p \times k}$, the weight vector from the input layer to the first layer, and $m$ is now the number of nodes in the second hidden layer. Define 
$(\theta, w) \in \Theta_{d,m, \delta, C} \overset{\Delta}{=} \{u_{i}, v_{i}, \tau_{i}, \tau_{0}, w \mid \sum^m_{i=1}|\tau_{k}|<C, |\tau_{0} - g(0)|<C, |v_{i}| \leq \|u_{i}\|_1, \|u_{i}\|_1 \leq \delta, \|w^{(j)}\|_1 \leq \delta\}$, $u_{i}$ and $v_{i}$ are the parameters from the first hidden layer to the second hidden layer, $\tau_{i}$ and $\tau_0$ are the parameters from the second hidden layer to the output layer, constant $C \geq C_g$, $\delta$ is a sufficiently large constant such that $\sup_z|\phi(\delta z) - \text{sign}(z)| \leq \frac{1}{\sqrt{m}}$. 

\begin{proposition}\label{prop3.1}
If $f(x) = g(\beta^{T}_dx)$, and $C_g$ is the finite first absolute moment of the Fourier magnitude distribution of $g$. Then there exists a neural network model of two hidden layers,\\
\begin{equation} 
f_{d,m}(x,\bar{\theta},\beta_d) = \sum_{i=1}^{m}\bar{\tau}_{i}\phi(\bar{u}_{i}^{T}I(\beta^{T}_dx)+\bar{v}_{i}) + \bar{\tau}_0, \\
 \label{eq1}
\end{equation}
such that
\begin{equation} 
\big\|f(\cdot)-f_{d,m}(\cdot,\bar{\theta},\beta_d)\big\| \leq \frac{2C_g}{\sqrt{m}}, 
\end{equation}
where $(\bar{\theta}, \beta_d) \in \Theta_{d,m, \delta, C}$. 
\end{proposition}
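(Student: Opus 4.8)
The plan is to reduce the approximation of the composite target $f(x)=g(\beta_d^T x)$ to a direct application of the Barron-class approximation theorem for the inner function $g$, exploiting the fact that $f$ depends on $x$ only through the $d$-dimensional projection $z=\beta_d^T x$. Since the support of $x$ is closed and bounded and each column of $\beta_d$ has bounded $l_1$-norm, the linear map $x\mapsto\beta_d^T x$ carries this support into a closed and bounded subset of $\mathbb{R}^d$ containing the origin; let $\mu$ denote the pushforward of the $x$-measure $\mu_X$ under this map. Then, for any second-layer approximant $g_m(z,\bar\theta)=\sum_{i=1}^m\bar\tau_i\phi(\bar u_i^T z+\bar v_i)+\bar\tau_0$, the change-of-variables identity gives $\|f-f_{d,m}\|^2=\int|g(\beta_d^T x)-g_m(\beta_d^T x)|^2\,\mu_X(dx)=\int|g(z)-g_m(z)|^2\,\mu(dz)=\|g-g_m\|^2$, so it suffices to control $\|g-g_m\|$ over the projected domain.

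First I would invoke Barron's theorem: because $C_g<\infty$, for every $m\ge1$ there is a single-hidden-layer network $g_m(z,\bar\theta)$ with $m$ sigmoidal nodes obeying $\|g-g_m\|\le \frac{2C_g}{\sqrt m}$, together with the parameter bounds intrinsic to the construction — namely $\sum_{i=1}^m|\bar\tau_i|<C$, $|\bar\tau_0-g(0)|<C$, and $|\bar v_i|\le\|\bar u_i\|_1$ (the last reflecting that each ridge threshold can be taken inside the range dictated by the bounded domain). I would then transplant these coefficients $\bar\theta$ into the two-hidden-layer model~(\ref{eq1}), taking the true projection $\beta_d$ as the first-layer weights so that the first layer feeds $z=\beta_d^T x$ through $I(\cdot)$ into the second hidden layer; with $I$ acting as the identity on the bounded range of $\beta_d^T x$, one obtains $f_{d,m}(x,\bar\theta,\beta_d)=g_m(\beta_d^T x,\bar\theta)$ exactly.

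The remaining work is bookkeeping on membership in $\Theta_{d,m,\delta,C}$. I would verify that the coefficient and threshold constraints inherited from Barron's construction are precisely those defining the admissible set, that $\|\bar u_i\|_1\le\delta$ holds after the standard rescaling of the ridge parameters (where the choice of $\delta$ large enough that $\sup_z|\phi(\delta z)-\mathrm{sign}(z)|\le 1/\sqrt m$ is used), and that each column of $\beta_d$ can be normalized to satisfy $\|\beta_d^{(j)}\|_1\le\delta$. Combining the change-of-variables identity with Barron's bound then yields $\|f-f_{d,m}(\cdot,\bar\theta,\beta_d)\|=\|g-g_m\|\le\frac{2C_g}{\sqrt m}$ with $(\bar\theta,\beta_d)\in\Theta_{d,m,\delta,C}$.

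I expect the main obstacle to be the precise treatment of the first-layer map $I(\cdot)$ and its interplay with the rescaling that enforces $\|\bar u_i\|_1\le\delta$. If $I$ is only an approximate identity built from sigmoidal units rather than the exact identity, one must show that the extra error it introduces is absorbed into the stated constant and does not degrade the $O(1/\sqrt m)$ rate; and one must confirm that the $\delta$-scaling making $\phi(\delta z)$ uniformly close to $\mathrm{sign}(z)$ does not inflate the $\sum_i|\bar\tau_i|<C$ budget. Everything else is a faithful transcription of Barron's theorem through the dimension-reduction change of variables.
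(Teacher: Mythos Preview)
Your proposal is correct and follows essentially the same route as the paper's proof: invoke Barron's single-hidden-layer approximation for $g$, then compose with the first layer $z=\beta_d^T x$ (with $I$ the literal identity) and use the change-of-variables identity $\|f-f_{d,m}\|=\|g-g_m\|$. Your anticipated obstacle concerning $I$ being only an approximate identity is a non-issue here, since the paper explicitly takes $I(\cdot)$ to be the true identity activation on the first hidden layer; on the other hand, your attention to verifying membership in $\Theta_{d,m,\delta,C}$ is more careful than the paper's own proof, which simply asserts the bound without checking the parameter constraints.
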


\noindent \textit{Remark.}
Using the identity function as the activation function and excluding bias from the input layer to the first layer of $f_{d,m}(x,\bar{\theta},\beta_d)$ together with the approximation result of \cite{barron1993universal} are crucial for obtaining the structural dimension and central space. This allows us to leverage the advantage of approximation results for neural networks.

Approximation of $g(z)$ can be obtained by a model selection method with multiple layers if needed. Adding the current two layer network to deep layers does not affect dimension approximation. Specifically, let $g_0 \overset{\Delta}{=} z$, $g_i(g_{i-1}) = \phi((w^{(i-1)})^Tg_{i-1} + b^{(i-1)})$ for $i= 1, 2,  \cdots, L$. Define $G(z,\theta) = w^{(L)} * g_{L} \circ g_{L-1} \circ \cdots \circ g_1(z) + b^{(L)}$ as a neural network of $L \geq 1$ hidden layers, where $\theta = (w^{(0)},b^{(0)},\cdots,w^{(L)},b^{(L)})$ ( $(w^{(i-1)}, b^{(i-1)})$ are the parameters from the $(i-1)^{th}$ layer to the $i^{th}$ layer and $(w^{(L)},b^{(L)})$ are the parameters from the $L^{th}$ layer to the output layer). For  $f(x) = G(\beta^{T}_dx)$ and $\big\|G(\cdot) - G(\cdot,\bar{\theta}) \big\| \leq B$, if we add another layer $\beta^T_dx$ between the input layer and the first hidden layer with identity function $I(\cdot)$ as the activation function, then, for the neural network with $(L+1)$ hidden layers, $f_{d}(x,\bar{\theta},\beta_d) = G(I(\beta_d^Tx),\bar{\theta})$, we have the following,
\begin{equation}
    \big\|f(\cdot) - f_{d}(\cdot,\bar{\theta},\beta_d) \big\| \leq B,
\end{equation}

\noindent where $B$ can be different explicit bounds that may be derived under specific conditions in literature.

Extensive work has also been introduced in more recent times with the focus on multilayer neural networks using many hidden layers \cite{schmidhuber2015deep, kohler2016nonparametric, bauer2019deep, stinchcombe1999neural}. \cite{mhaskar2016deep} compared deep network and shallow network from an approximation theory approach. 
In the Sobolev space, the approximation bound can be further improved with two hidden layers \cite{de2021approximation}. For $d \in \mathbb{N}$, let $\zeta \in \mathbb{N}_0^d$ of $d$-tuple of nonnegative integers be a multi-index, where $\mathbb{N}_0^d = \{(\zeta_1, \cdots, \zeta_d)|\zeta_i \text{ is a non-negative integer}, i = 1, \cdots, d\}$. Denote $|\zeta| = \sum_{i=1}^d \zeta_i$, $\zeta! = \prod_{i=1}^d \zeta_i!$, and $x^{\zeta} = \prod_{i=1}^d x_i^{\zeta_i}$ for $x \in \mathbb{R}^d$ as the corresponding multinomial. For a multi-index $\zeta$, the multinomial coefficient is defined as follows,
\begin{align}
    \binom{|\zeta|}{\zeta} = \frac{|\zeta|!}{\zeta!}.
\end{align}
For $\Omega \subseteq \mathbb{R}^d$ and a function $f: \Omega \rightarrow \mathbb{R}$, denote the classical or distributional derivative of $f$ as follows, 
\begin{align}
    D^{\zeta}f = \frac{\partial^{\zeta}f}{\partial x_1^{\zeta_1} \cdots \partial x_d^{\zeta_d}},
\end{align}
and for $n^{th}$ order partial derivative, denote $P_{n,d} = \{\zeta \in \mathbb{N}_0^d: |\zeta| = n\}$.

\noindent For $d \in \mathbb{N}$, $1 \leq q \leq \infty$, define the Sobolev space,
\begin{align}
    W^{k,q}(\Omega) = \{f \in L^q(\Omega): D^{\zeta}f \in L^q(\Omega) \text{ for all } \zeta \in \mathbb{N}_0^d \text{ with } |\zeta| \leq k\},
\end{align}
where $k \in \mathbb{N}_0$, $\Omega \subseteq \mathbb{R}^d$ is open, and $L^q(\Omega)$ is the Lebesgue space.

\noindent For $q < \infty$, define the seminorms on $W^{k,q}(\Omega)$,
\begin{align}
    |f|_{W^{m,q}(\Omega)} = \left(\sum_{|\zeta| = m} \|D^{\zeta}f\|^q_{L^q(\Omega)}\right)^{1/q},
\end{align}
for $m = 0, \cdots, k$, and based on this, define the following norm, 
\begin{align}
    \|f\|_{W^{k,q}(\Omega)} = \left(\sum_{m=0}^k |f|^q_{W^{m,q}(\Omega)}\right)^{1/q}.
\end{align}

\noindent When $q = \infty$, define the seminorms on $W^{k,\infty}(\Omega)$ as
\begin{align}
    |f|_{W^{k,\infty}(\Omega)} = \max_{|\zeta| = m}\|D^{\zeta}f\|^q_{L^{\infty}(\Omega)},
\end{align}
where $m = 0, \cdots, k$, and based on this, define the following norm, 
\begin{align}
    \|f\|_{W^{k,\infty}(\Omega)} = \max_{0 \leq m \leq k}|f|_{W^{m,\infty}(\Omega)}.
\end{align}
\noindent Based on the previous results, we readily have the following proposition. 
\begin{proposition}\label{prop2}
Let $d,s \in \mathbb{N}$, $\delta > 0$, and $f(x) = g(\beta_d^Tx)$, where  $\beta_d$ is full ranked matrix of dimension $d$, $x \in \mathcal{B}$, where $\mathcal{B} \subset \mathbb{R}^p$ is a bounded open set, $\beta_d^T\mathcal{B} \subset [-M,M]^d$, and $g \in W^{s,\infty}([-M,M]^d)$. There exist a constants $\mathcal{C}_{d,s,g} > 0$ and an integer $\mathcal{C}_0>0$, such that, for every $\mathcal{C} \in \mathbb{N}$ with $\mathcal{C} > \mathcal{C}_0$, there exists a tanh neural network, $f^\mathcal{C}(x) = g^\mathcal{C}(I(\beta_d^Tx))$, of three hidden layers. The width of the first layer is $d$, the width of the second layer is at most $3\ceil{\frac{s}{2}}|P_{s-1,d+1}| + d(\mathcal{C}-1)$, and the width of the third layer is at most $3\ceil{\frac{d+2}{2}}|P_{d+1,d+1}|\mathcal{C}^d$, such that
\begin{align}
   \|f - f^N\|_{L^{\infty}(\mathcal{B})} \leq (1+\delta)\frac{(2M)^s\mathcal{C}_{d,s,g}}{\mathcal{C}^s},
\end{align}
where $\mathcal{C}_{d,s,g} = \frac{1}{s!}(\frac{3d}{2})^s|g|_{W^{s,\infty}([-M,M]^d)}$.
\end{proposition}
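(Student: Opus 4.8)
The plan is to derive this directly from the tanh approximation theorem for Sobolev functions on the unit cube established in \cite{de2021approximation}, and then to (i) transport that bound from $[0,1]^d$ to the cube $[-M,M]^d$ by an affine change of variables, and (ii) prepend a linear identity-activation layer implementing $z \mapsto \beta_d^T x$ so that the approximant for $g$ on $\mathbb{R}^d$ becomes an approximant for $f = g(\beta_d^T \cdot)$ on $\mathcal{B} \subset \mathbb{R}^p$.

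First I would invoke the base result: for $h \in W^{s,\infty}([0,1]^d)$ and any $\delta > 0$, there is an integer threshold and, for each $\mathcal{C}$ above it, a two-hidden-layer tanh network $\hat{h}^{\mathcal{C}}$ whose two hidden layers have widths at most $3\ceil{\frac{s}{2}}|P_{s-1,d+1}| + d(\mathcal{C}-1)$ and $3\ceil{\frac{d+2}{2}}|P_{d+1,d+1}|\mathcal{C}^d$ respectively, with
\[
\big\| h - \hat{h}^{\mathcal{C}} \big\|_{L^{\infty}([0,1]^d)} \le (1+\delta)\,\frac{1}{s!}\Big(\tfrac{3d}{2}\Big)^s \frac{|h|_{W^{s,\infty}([0,1]^d)}}{\mathcal{C}^s}.
\]
This already supplies the architecture and rate quoted in the statement, so the two nontrivial width bounds are accounted for.

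Next I would handle the domain. Let $T:[-M,M]^d \to [0,1]^d$ be the componentwise affine map $T(z) = (z + M\mathbf{1})/(2M)$ and set $h \overset{\Delta}{=} g \circ T^{-1}$. By the chain rule each $s$-th order derivative of $h$ carries a factor $(2M)^s$ relative to that of $g$, so $|h|_{W^{s,\infty}([0,1]^d)} = (2M)^s\,|g|_{W^{s,\infty}([-M,M]^d)}$. Applying the displayed bound to $h$ and defining $g^{\mathcal{C}}(z) \overset{\Delta}{=} \hat{h}^{\mathcal{C}}(T(z))$ gives
\[
\big\| g - g^{\mathcal{C}} \big\|_{L^{\infty}([-M,M]^d)} \le (1+\delta)\,\frac{(2M)^s}{\mathcal{C}^s}\,\frac{1}{s!}\Big(\tfrac{3d}{2}\Big)^s |g|_{W^{s,\infty}([-M,M]^d)} = (1+\delta)\frac{(2M)^s \mathcal{C}_{d,s,g}}{\mathcal{C}^s}.
\]
Crucially, because $T$ is affine it can be folded into the weights and biases of the first hidden layer of $\hat{h}^{\mathcal{C}}$, so $g^{\mathcal{C}}$ is still a tanh network with the same two-hidden-layer architecture and the same width bounds.

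Finally I would lift to $\mathbb{R}^p$. Setting $f^{\mathcal{C}}(x) = g^{\mathcal{C}}(I(\beta_d^T x))$ prepends one linear layer of width $d$ (identity activation, no bias), so the two hidden layers of $g^{\mathcal{C}}$ become the second and third hidden layers of $f^{\mathcal{C}}$, yielding exactly the three-hidden-layer widths $d$, $3\ceil{\frac{s}{2}}|P_{s-1,d+1}| + d(\mathcal{C}-1)$, and $3\ceil{\frac{d+2}{2}}|P_{d+1,d+1}|\mathcal{C}^d$ claimed. Since $\beta_d^T\mathcal{B} \subset [-M,M]^d$, for every $x \in \mathcal{B}$ we have $|f(x) - f^{\mathcal{C}}(x)| = |g(\beta_d^T x) - g^{\mathcal{C}}(\beta_d^T x)| \le \|g - g^{\mathcal{C}}\|_{L^{\infty}([-M,M]^d)}$; taking the supremum over $x \in \mathcal{B}$ gives the claimed estimate, with $\mathcal{C}_0$ inherited from the threshold of the base theorem. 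The step I expect to require the most care is the rescaling bookkeeping: confirming that the affine reparametrization scales the top-order Sobolev seminorm by exactly $(2M)^s$ (the lower-order terms and the $L^{\infty}$ norm being irrelevant, as only the $s$-th seminorm enters $\mathcal{C}_{d,s,g}$), and that absorbing $T$ into the first layer genuinely preserves the width bounds rather than inflating them. The lifting and $L^{\infty}$-transfer steps are then immediate, since the prepended identity layer is exact and the image $\beta_d^T\mathcal{B}$ remains inside the cube on which the approximation bound holds.
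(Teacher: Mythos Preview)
Your proposal is correct and follows essentially the same route as the paper: apply Theorem~5.1 of \cite{de2021approximation} on $[0,1]^d$ to $h=g\circ T^{-1}$ with $T(z)=(z+M\mathbf{1})/(2M)$, pick up the $(2M)^s$ factor from the seminorm scaling, and then compose with the identity layer $x\mapsto\beta_d^Tx$. If anything, you are more explicit than the paper about why absorbing $T$ into the first tanh layer preserves the width bounds and why the prepended linear layer yields the stated three-layer architecture.
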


\subsection{Error Bound for Training Neural Network}\label{error bound}
In practical neural network training, let $f_{k,m}(x,\hat{\theta}_{k,N},\hat{w}_{k,N})$ be the prediction model trained from a neural network defined in (\ref{eq:2}) based on a training data, $\{x_i, y_i\}^N_{i=1}$, of sample size $N$, where 
\begin{align}
    (\hat{\theta}_{k,N},\hat{w}_{k,N}) = \arg\min_{(\theta, w) \in \Theta_{N}}\left(\frac{1}{N} \sum_{i=1}^{N}
(y_i - f_{k,m}(x_i,\theta, w))^2 + \lambda \frac{L_{k,m}(\theta, w)}{N}\right),
\end{align}
where $\lambda$ is a positive constant, $\Theta_{N}$ is a discrete set of parameter vectors $(\theta, w)$, and $L_{k,m}(\theta, w)$ is the complexity of $(\theta, w) \in \Theta_{N}$ for given $m$ and $N$ that satisfies the following \cite{barron1994approximation},
\begin{enumerate}
    \item $L_{k,m}(\theta, w)$ is a positive number,
    \item $\sum_{(\theta, w) \in \Theta_{N}}e^{-L_{k,m}(\theta, w)} \leq 1$.
\end{enumerate}
The error bound is established as follows. First, we analyze the error bound when $k = d$. Denote $a = a_1(\delta d + \delta + 1) + a_0 + 1$. The parameter vector $(\theta, w)$ in the sigmoidal network in
\begin{equation}
   f_{d,m}(x,\theta,w_d) = \sum_{i=1}^{m}\tau_{i}\phi(u_{i}^{T}I(w^{T}_dx)+v_{i}) + \tau_0
    \label{eq.2}
\end{equation}
consists of weights $u_i, v_i,\tau_i$ for $i = 1, \cdots, m$, and $\tau_0$, where $u_i \in \mathbb{R}^d, v_i, \tau_i, \tau_0 \in \mathbb{R}$, and $w_d = (w^{(1)}, \cdots, w^{(d)}) \in \mathbb{R}^{p \times d}$, where $w^{(j)} \in \mathbb{R}^p$ for $j = 1, \cdots, d$. For some constant $\delta > 0$, a continuous parameter space $\Theta_{d,m,\delta, C}$ contained in $\mathbb{R}^{md+pd+2m+1}$ is taken to be the set of all such $(\theta, w_d)$ for which $\|u_i\|_1 \leq \delta$, $|v_i| \leq \delta$, and $\|w^{(j)}\|_1 \leq \delta$. 

We control the precision with which the coordinates of the parameter vectors are allowed to be represented. For each $\epsilon >0$ and $C\geq 1$, let $\Theta_{d,m,\epsilon,\delta, C}$ be a discrete set of parameter points in $\mathbb{R}^{md+pd+2m+1}$ that $\epsilon$-covers $\Theta_{d,m,\delta, C}$ in the sense that, for every $(\theta, w_d)$ in $\Theta_{d,m,\delta, C}$, there is a $(\theta^*, w^*_d)$ in $\Theta_{d,m,\epsilon,\delta, C}$ such that for $i = 1,\cdots, m$ and $j = 1,\cdots, d$,
\begin{align}
    \label{eq.5}
    \|u_i - u^*_i\|_1 \leq \epsilon \\
    \label{eq.6}
    \|w^{(j)} - w^{(j)*}\|_1 \leq \epsilon  \\
    \label{eq.7}
    |v_i - v^*_i| \leq \epsilon  \\
    \label{eq.8}
    \sum^m_{i=1}|\tau_i - \tau^*_i| \leq C\epsilon \\
    \label{eq.9}
    |\tau_0 - \tau^*_0| \leq C\epsilon.
\end{align}

\begin{theorem}\label{theorem3.1}
For $f(x) = g(w^Tx)$, where $w \in \mathbb{R}^{p \times d}$, let $\Theta_N = \Theta_{d,m,\epsilon, \delta, C}$, where $\Theta_{d,m,\epsilon, \delta, C}$ defined above, and let $\epsilon \sim \left(\frac{(m+p)d}{N}\right)^{1/2}$. Then the statistical risk of the prediction model $f_{d,m}(x,\hat{\theta}_{d,N},\hat{w}_{d,N})$ satisfies
\begin{align}
E\big\|f(\cdot) - f_{d,m}(\cdot,\hat{\theta}_{d,N},\hat{w}_{d,N})\big\|^2 \leq O\left(\frac{C_g^2}{m}\right) + O\left(\frac{(m+p)d}{N}\log\left(\frac{N}{(m+p)d}\right)\right). 
\end{align}
For $m \sim C_g(\frac{N}{d}/\log{\frac{N}{d}})^{\frac{1}{2}}$,
\begin{align}
E\big\|f(\cdot) - f_{d,m}(\cdot,\hat{\theta}_{d,N},\hat{w}_{d,N})\big\|^2 \leq O\left(\left(\frac{d}{N}\log\frac{N}{d}\right)^{\frac{1}{2}}\right). 
\end{align}
\end{theorem}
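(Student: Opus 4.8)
The plan is to follow Barron's complexity-regularization (index of resolvability) argument, splitting the risk into an approximation error and an estimation error and then balancing the two via the stated choice of $m$. For the approximation piece, Proposition~\ref{prop3.1} applies directly with $\beta_d = w$: since $f(x)=g(w^Tx)$ with finite $C_g$, there is a network $f_{d,m}(\cdot,\bar\theta,w_d)$ with continuous parameters in $\Theta_{d,m,\delta,C}$ obeying $\|f-f_{d,m}(\cdot,\bar\theta,w_d)\|\le 2C_g/\sqrt{m}$, so the squared approximation error is $O(C_g^2/m)$. Because the estimator instead ranges over the discrete grid $\Theta_N=\Theta_{d,m,\epsilon,\delta,C}$, the next step is to transfer this good network to a grid point: choose $(\theta^*,w_d^*)\in\Theta_N$ that $\epsilon$-covers $(\bar\theta,w_d)$ in the sense of (\ref{eq.5})--(\ref{eq.9}). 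Using that $I(\cdot)$ is the identity, that $\phi$ is bounded by $a_0$ and Lipschitz with constant $a_1$, that $x$ lies in a bounded set, and the box constraints $\|u_i\|_1,\|w^{(j)}\|_1\le\delta$, a term-by-term triangle-inequality accounting over the $\tau_i,u_i,v_i,w^{(j)},\tau_0$ perturbations produces a discretization bound $\|f_{d,m}(\cdot,\bar\theta,w_d)-f_{d,m}(\cdot,\theta^*,w_d^*)\|\le aC\epsilon$, where $a=a_1(\delta d+\delta+1)+a_0+1$ is precisely the constant defined before the theorem; with $\epsilon\sim((m+p)d/N)^{1/2}$ this contributes only $O((m+p)d/N)$ to the squared error.

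I would then control the estimation error through the penalty. Since $L_{d,m}$ satisfies the Kraft-type inequality $\sum_{(\theta,w)\in\Theta_N}e^{-L_{d,m}(\theta,w)}\le 1$, the complexity-regularization risk bound gives
\[
E\big\|f-f_{d,m}(\cdot,\hat\theta_{d,N},\hat w_{d,N})\big\|^2 \;\le\; O\!\left(\big\|f-f_{d,m}(\cdot,\theta^*,w_d^*)\big\|^2+\frac{L_{d,m}(\theta^*,w_d^*)}{N}\right).
\]
It remains to bound the codelength. The grid sits in $\mathbb{R}^{md+pd+2m+1}$, and an $\epsilon$-cover of a $\delta$-bounded box in each coordinate has cardinality of order $(\delta/\epsilon)^{md+pd+2m+1}$, so assigning the uniform codelength $L_{d,m}\approx\log|\Theta_N|\approx(m+p)d\log(1/\epsilon)$ (the $md+pd$ terms dominating $2m+1$) and substituting $\epsilon\sim((m+p)d/N)^{1/2}$ yields $L_{d,m}/N=O\!\big(\tfrac{(m+p)d}{N}\log\tfrac{N}{(m+p)d}\big)$. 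Combining the approximation term $O(C_g^2/m)$ with this estimation term — and absorbing the smaller discretization contribution $O((m+p)d/N)$ — establishes the first displayed inequality.

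The second bound follows by optimizing over $m$. Treating $C_g,d$ as fixed and $p$ as negligible relative to $m$ for large $N$, balancing $C_g^2/m$ against $\tfrac{md}{N}\log\tfrac{N}{md}$ gives $m\sim C_g(\tfrac{N}{d}/\log\tfrac{N}{d})^{1/2}$, at which point both terms are of order $C_g(\tfrac{d}{N}\log\tfrac{N}{d})^{1/2}$, which is the claim.

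I expect the main obstacle to be the discretization step: verifying that the metric entropy of the parameter-to-function map is uniformly (in $x$) controlled by the per-coordinate $\epsilon$-cover, and carefully tracking how $\delta,d,a_0,a_1,C$ combine into the constant $a$ so that the discretization error is provably of lower order than the complexity term. Once the codelength and the approximation error are in place, the risk inequality itself is a direct invocation of the complexity-regularization machinery of \cite{barron1994approximation}.
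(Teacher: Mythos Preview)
Your proposal is correct and follows essentially the same route as the paper: Proposition~\ref{prop3.1} for the $O(C_g^2/m)$ approximation term, a uniform discretization bound $\|f_{d,m}(\cdot,\theta,w_d)-f_{d,m}(\cdot,\theta^*,w_d^*)\|\le aC\epsilon$ (the paper's Lemma~3.1, with exactly the constant $a$ you wrote), a cardinality bound $\log|\Theta_{d,m,\epsilon,\delta,C}|\lesssim (md+pd+2m+1)\log(1/\epsilon)$ (Lemma~3.2), and then Barron's index-of-resolvability risk inequality to combine them; the paper likewise derives $\epsilon\sim((m+p)d/N)^{1/2}$ by balancing the $(aC\epsilon)^2$ discretization term against the $\tfrac{\lambda}{N}Q\log(1/\epsilon)$ complexity term, after which the optimization over $m$ is identical to yours.
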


The proof of Theorem~\ref{theorem3.1} is based on Lemma 3.1 and Lemma 3.2, proved in Appendix~\ref{proof: thm1}.

We introduce the following assumptions to extend Theorem~\ref{theorem3.1} with the following corollary.

\noindent \textbf{Assumption 1:} $f(x) = g(\beta^Tx)$, where $\beta \in \mathbb{R}^{p \times d}$ with $rank(\beta) = d$, and $C_g$ is the finite first absolute moment of the Fourier magnitude distribution of $g$ \cite{barron1994approximation}. \\
\noindent \textbf{Assumption 2:} For $k<d$, there exists $c > 0$ such that for any $w \in \mathbb{R}^{p \times k}$ and $\tilde{g}(z)$, where $z \in \mathbb{R}^k$, $\big\|f(\cdot) - \tilde{g}(w^T\cdot) \big\| \geq c$.
 
\begin{corollary}\label{cor3.1}
If \textbf{Assumptions 1} and \textbf{2} hold, the statistical risk of the prediction model $f_{k,m}(x,\hat{\theta}_{k,N},\hat{w}_{k,N})$ satisfies the following. 
\begin{enumerate}
    \item If $k<d$, then
\begin{equation}
E\big\|f(\cdot) - f_{k,m}(\cdot,\hat{\theta}_{k,N},\hat{w}_{k,N})\big\|^2  \geq c^2,
\end{equation}
where $c$ is a positive constant.
\item If $k \geq d$, then
\begin{align}
E\big\|f(\cdot) - f_{k,m}(\cdot,\hat{\theta}_{k,N},\hat{w}_{k,N})\big\|^2  
= O\left(\frac{C_g^2}{m}\right) + O\left(\frac{(m+p)k}{N}\log(\frac{N}{(m+p)k})\right).
\end{align}
For $m \sim C_g\left(\frac{N}{k}/\log{\frac{N}{k}}\right)^{\frac{1}{2}}$, 
\begin{align}
    E\big\|f(\cdot) - f_{k,m}(\cdot,\hat{\theta}_{k,N},\hat{w}_{k,N})\big\|^2  &= O\left(\left(\frac{k}{N}\log\frac{N}{k}\right)^{1/2}\right) \nonumber \\
    &= O\left(\left(\frac{1}{N}\log N\right)^{1/2}\right). 
\end{align}
\end{enumerate}
\end{corollary}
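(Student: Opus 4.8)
The plan is to handle the two regimes separately: the lower bound for $k<d$ follows almost directly from Assumption~2, while the upper bound for $k\geq d$ reduces to re-running Theorem~\ref{theorem3.1} with the first-layer width $k$ in place of $d$. For the lower bound, the key observation is that every realization of the trained network already has the structure over which Assumption~2 quantifies. Because the activation on the first hidden layer is the identity $I(\cdot)$, the model in~(\ref{eq:2}) satisfies $f_{k,m}(x,\theta,w)=\tilde g(w^Tx)$ with $\tilde g(z)=\sum_{i=1}^m\tau_i\phi(u_i^Tz+v_i)+\tau_0$ a function on $\mathbb{R}^k$ and $w\in\mathbb{R}^{p\times k}$. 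Hence, for every realization of the data, the estimator $f_{k,m}(\cdot,\hat\theta_{k,N},\hat w_{k,N})$ is of the form $\tilde g(\hat w_{k,N}^T\cdot)$ with $\hat w_{k,N}\in\mathbb{R}^{p\times k}$, so \textbf{Assumption 2} applies pathwise and gives $\|f(\cdot)-f_{k,m}(\cdot,\hat\theta_{k,N},\hat w_{k,N})\|\geq c$ surely. Squaring and taking expectations yields $E\|f-f_{k,m}\|^2\geq c^2$, with no real obstacle beyond recognizing that the trained network lies in the quantified function class.

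For the upper bound ($k\geq d$), I would show that widening the first layer from $d$ to $k$ leaves the approximation error unchanged and inflates only the estimation error, through the larger parameter count. First I would embed the approximating network of Proposition~\ref{prop3.1} into the wider architecture: set $w=[\beta_d,\,0]\in\mathbb{R}^{p\times k}$ by appending $k-d$ zero columns, keep the same second-layer weights with the extra coordinates of each $u_i$ set to zero, and verify that this padded parameter vector still lies in $\Theta_{k,m,\delta,C}$ (the constraints $\|w^{(j)}\|_1\leq\delta$ and $\|u_i\|_1\leq\delta$ are preserved since the added entries vanish). Because the output is unchanged, the approximation error stays at $2C_g/\sqrt m$, so the $O(C_g^2/m)$ term carries over verbatim. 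I would then repeat the covering/complexity argument behind Theorem~\ref{theorem3.1} (Lemmas 3.1 and 3.2) with the parameter space now sitting in $\mathbb{R}^{mk+pk+2m+1}$ and $\epsilon\sim((m+p)k/N)^{1/2}$, which replaces every $d$ in the estimation term by $k$ and produces the $O\!\big(\frac{(m+p)k}{N}\log\frac{N}{(m+p)k}\big)$ bound. Optimizing over $m\sim C_g(\frac{N}{k}/\log\frac{N}{k})^{1/2}$ as in Theorem~\ref{theorem3.1} gives $O((\frac{k}{N}\log\frac{N}{k})^{1/2})$, and the final simplification to $O((\frac{1}{N}\log N)^{1/2})$ follows because $k\leq p$ is fixed.

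The main obstacle I anticipate is the approximation step in Part 2: one must certify that the trivial dimension-lift $\tilde g(z_1,\dots,z_k)=g(z_1,\dots,z_d)$ preserves the Barron constant. Arguing through the Fourier moment directly is delicate, since the Fourier transform of $\tilde g$ is degenerate (it concentrates on the subspace $\eta_{d+1}=\cdots=\eta_k=0$ and formally carries Dirac factors), so I would avoid the Fourier route entirely and rely on the explicit network embedding above. For that embedding the identity between the $d$-node and padded $k$-node networks makes both the preservation of the $O(C_g^2/m)$ bound and membership in $\Theta_{k,m,\delta,C}$ transparent, after which everything reduces to a bookkeeping re-run of the earlier estimates.
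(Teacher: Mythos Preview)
Your proposal is correct and follows essentially the same approach as the paper. The paper's proof of Part~1 is exactly your pathwise application of Assumption~2 after writing the estimator as $\tilde g(\hat w_{k,N}^T\cdot)$, and for Part~2 the paper simply states that the argument is ``similar as in Theorem~1''; your padding embedding $w=[\beta_d,\,0]$ together with a re-run of Lemmas~3.1 and~3.2 with $k$ in place of $d$ is precisely the intended reading of that sentence, and your decision to bypass the Fourier moment for $\tilde g$ in favor of the explicit network embedding is the right way to make the step rigorous.
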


\subsection{Structural Dimension Analysis}\label{structural dim}

Given $1 \leq k \leq p$, consider the analysis in Corollary~\ref{cor3.1}, we know that when $k \geq d$, the estimation risk is of order $O(\frac{1}{N}\log{N})^{\frac{1}{2}}$, and when $1 \leq k < d$, its risk is lower bounded away from 0. To determine the true dimension, consider the following criterion (CR) to select $k$: For $1 \leq k \leq p$, 
\begin{equation}\label{eq.CR}
    CR(k) \overset{\Delta}{=} MSE_{va}(k) + k*pen(N, n_{va}), 
\end{equation}
where $MSE_{va}(k)$ is the mean squared error (MSE) on the validation data, and $pen(N, n_{va})$ is a penalty that depends on the training sample size, $N$, and validation sample size, $n_{va}$ ($0.1N \leq n_{va} \leq 0.3N$ in general for most machine learning applications). Let $\hat{d} = \arg\min_{1 \leq k \leq p}CR(k)$ be the predicted structural dimension of $d$ based on (\ref{eq.CR}). If the assumptions in Corollary~\ref{cor3.1} hold, then we have the following theorem.

\begin{theorem}\label{theorem3.2}
For $pen(N, n_{va}) \overset{\Delta}{=} ((\frac{1}{N}\log{N})^{\frac{1}{2}}+\frac{1}{\sqrt{n_{va}}})a_{N,n_{va}}$ with $a_{N,n_{va}} \rightarrow \infty$, and \\ $pen(N, n_{va}) \rightarrow 0$, we have 
\begin{equation}
    P(\hat{d} = d) \rightarrow 1.
\end{equation}
\end{theorem}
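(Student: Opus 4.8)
The plan is to show that, for every $k\neq d$ with $1\le k\le p$, the criterion gap $CR(k)-CR(d)$ is positive with probability tending to one, and then to conclude by a union bound over the finite index set $\{1,\dots,p\}\setminus\{d\}$. Write $R(k):=\|f(\cdot)-f_{k,m}(\cdot,\hat\theta_{k,N},\hat w_{k,N})\|^2$ for the realized $L_2$ risk (under the distribution $\mu$ of $X$) of the trained network. The first observation is that $f_{k,m}(\cdot,\hat\theta_{k,N},\hat w_{k,N})$ depends on $x$ only through $\hat w_{k,N}^T x$, i.e.\ it has the form $\tilde g(\hat w_{k,N}^T\cdot)$ with $\hat w_{k,N}\in\mathbb{R}^{p\times k}$; hence \textbf{Assumption 2} gives the \emph{deterministic} lower bound $R(k)\ge c^2$ whenever $k<d$, valid for every realization of the training data, not merely in expectation. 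On the other side, Corollary~\ref{cor3.1}(2) together with Markov's inequality yields $R(d)=O_P\big((\tfrac1N\log N)^{1/2}\big)$, so $R(d)\to 0$ in probability.

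Next I would pass from the population risks $R(k)$ to the validation MSE. Writing $y_i=f(x_i)+\epsilon_i$ on the validation set and expanding the square,
\[
MSE_{va}(k)=\frac{1}{n_{va}}\sum_i\big(f(x_i)-f_{k,m}(x_i)\big)^2+\frac{2}{n_{va}}\sum_i\big(f(x_i)-f_{k,m}(x_i)\big)\epsilon_i+\frac{1}{n_{va}}\sum_i\epsilon_i^2.
\]
The crucial point is that the last term does not depend on $k$, so it cancels in every difference $CR(k)-CR(d)$; likewise the irreducible $\sigma^2$ never enters the comparison. Conditioning on the training sample, the summands of the first term are i.i.d.\ and bounded (the activation is bounded by $a_0$, the weights lie in the bounded set $\Theta_{d,m,\delta,C}$, and $f$ is bounded on the compact domain), so Hoeffding's inequality gives $\tfrac{1}{n_{va}}\sum_i(f-f_{k,m})^2=R(k)+O_P(n_{va}^{-1/2})$, while the cross term is mean zero with variance $O(1/n_{va})$ and is therefore $O_P(n_{va}^{-1/2})$ by Chebyshev. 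Combining, $MSE_{va}(k)-MSE_{va}(d)=R(k)-R(d)+O_P(n_{va}^{-1/2})$, whence
\[
CR(k)-CR(d)=R(k)-R(d)+(k-d)\,pen(N,n_{va})+O_P\big(n_{va}^{-1/2}\big).
\]

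I would then split into two regimes. For \textbf{under-selection} $k<d$: since $R(k)\ge c^2$ and $R(d)\to 0$ in probability, the leading term exceeds $c^2/2$ with high probability, while the penalty contributes $(k-d)\,pen=-(d-k)\,pen$ with $|(d-k)\,pen|\le p\cdot pen\to 0$ and the fluctuation is $O_P(n_{va}^{-1/2})\to 0$; hence $CR(k)-CR(d)>c^2/4>0$ w.h.p. For \textbf{over-selection} $d<k\le p$: bounding $R(k)\ge 0$ gives $R(k)-R(d)\ge -R(d)=-O_P((\tfrac1N\log N)^{1/2})$, so
\[
CR(k)-CR(d)\ge(k-d)\,pen-O_P\big((\tfrac1N\log N)^{1/2}+n_{va}^{-1/2}\big)\ge\big((\tfrac1N\log N)^{1/2}+n_{va}^{-1/2}\big)\big(a_{N,n_{va}}-O_P(1)\big),
\]
using $(k-d)\ge 1$ and the definition of $pen$. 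Because $a_{N,n_{va}}\to\infty$, the right-hand side is positive with probability tending to one. A union bound over the at most $p-1$ values $k\neq d$ then gives $P(\hat d=d)\to 1$.

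The two defining requirements on the penalty are exactly what make each regime work, and pinning down their interplay is the main obstacle: $pen\to 0$ is needed so that the under-selection gap $c^2$ is not swamped, while $a_{N,n_{va}}\to\infty$ is needed so that the penalty strictly dominates the $O_P\big((\tfrac1N\log N)^{1/2}+n_{va}^{-1/2}\big)$ estimation-plus-validation error that could otherwise make an over-fitted model appear better on the validation set. The remaining technical care lies in the concentration step, which presupposes that $(y-f_{k,m})^2$ has controlled tails; I would secure this through boundedness of the network and the domain together with a finite-variance (or sub-Gaussian) condition on $\epsilon$, and in transferring the expectation bound of Corollary~\ref{cor3.1} to the in-probability statement $R(d)=O_P\big((\tfrac1N\log N)^{1/2}\big)$ via Markov.
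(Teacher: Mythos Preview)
Your proposal is correct and follows essentially the same route as the paper: decompose $CR(k)-CR(d)$ into a signal term, a noise cross term, and the penalty difference, then treat the under- and over-selection regimes separately using Corollary~\ref{cor3.1} for the orders and the two penalty conditions to close each case. Your write-up is in fact more careful than the paper's, making explicit the passage from population risk $R(k)$ to validation MSE via concentration, the use of Markov's inequality to convert the expectation bound of Corollary~\ref{cor3.1} into an in-probability statement, and the deterministic nature of the lower bound $R(k)\ge c^2$ under Assumption~2; the paper simply asserts the corresponding orders for its empirical terms $I$ and $II$ without spelling out these steps.
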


\subsection{Algorithms}\label{ch3.3.3}

The relationship between the second hidden layer and the prediction error is apparent from Proposition~\ref{prop3.1} in Section~\ref{ch3.3.2}. The proposition shows that an increase in the number of nodes in the second hidden layer leads to a reduction in prediction error and improves over traditional nonparametric methods. Based on this, we can fix the number of nodes in the second layer and search for the optimal number of nodes in the first hidden layer through a dimension reduction procedure.

For a neural network structure (NN) defined in (\ref{eq:2}), train $t$ times with loss $\frac{1}{N}\sum^N_{i=1}(y_i - f_{k,m}(x_i, \theta, w))^2 + \lambda\|(\theta,w)\|_1$ to obtain the prediction model with Algorithm~\ref{alg3.1}. Denote $(\hat{\theta},\hat{w})$ as the weights of the $l^{th}$ NN, $\hat{f}^{(l)}$, where $l = \arg\min_{j = 1, \cdots, t} \{MSE^{(j)}_{va}\}$, with $MSE^{(j)}_{va}$ as the MSE of the $j^{th}$ neural network on validation data, and 
$MSE_{va}(k)$ as the MSE of $\hat{f}^{(l)}$ on the validation data. 

\begin{algorithm}
	\renewcommand{\algorithmicrequire}{\textbf{Input:}}
	\renewcommand{\algorithmicensure}{\textbf{Output:}}
	\caption{Process of Neural Network Learning (NNL)}
    \label{alg3.1}
	\begin{algorithmic}[1]   
		\REQUIRE Training data, validation data, NN, $k$ , $t$\\
		\ENSURE $MSE_{va}(k)$, $\hat{w}_k$, $\hat{f}_k(\cdot,\hat{\theta},\hat{w}_k)$
		\STATE Train NN $t$ times to get $t$ neural networks, $\hat{f}^{(j)}, j=1, \cdots, t$
		\STATE Calculate $l = \arg\min_{j = 1, \cdots, t}{\{MSE^{(j)}_{va}\}}$
		\STATE $\hat{f}_k(\cdot,\hat{\theta},\hat{w}_k) = \hat{f}^{(l)}$ 
		\STATE $MSE_{va}(k) = MSE^{(l)}_{va}$
		\RETURN $MSE_{va}(k)$, $\hat{w}_k$, $\hat{f}_k(\cdot,\hat{\theta},\hat{w}_k)$.
	\end{algorithmic}  
\end{algorithm} 

\begin{algorithm}
  \renewcommand{\algorithmicrequire}{\textbf{Input:}}
  \renewcommand{\algorithmicensure}{\textbf{Output:}}
  \caption{Prediction model and central space}
  \label{alg3.2}
  \begin{algorithmic}[1]
    \REQUIRE Training data, validation data, $p$ \\
    \ENSURE $\hat{d},\hat{\beta},  f_{\hat{d},m}(x,\hat{\theta},\hat{\beta})$
    \STATE $m_0 = 1, n_0 = p$ // Initialize $m_0$, $n_0$
    \STATE Compute $k_1$ by (\ref{eq.k1}), $k_2$ by (\ref{eq.k2})
    \STATE $S_1 = \text{MSE}_{va}(k_1)$, $S_2 = \text{MSE}_{va}(k_2)$, $S_{n_0} = \text{MSE}_{va}(n_0)$  // Call Algorithm~\ref{alg3.1} 
    \WHILE{$n_0 - m_0 \geq 4$}
    \STATE $n_0 = k_2, S_{n_0} = S_2$
    \IF{$S_2 - S_{n_0} \leq (n_0-k_2)*pen $ and $S_1-S_2 \leq (k_2-k_1)*pen$}
    \STATE $k_2 = k_1, S_2 = S_1$
    \STATE Update $k_1$ by (\ref{eq.k1})
    \STATE $S_1 = \text{MSE}_{va}(k_1)$ // Call Algorithm~\ref{alg3.1}
    \ELSE
    \STATE $m_0 = k_1$, $k_1 = k_2, S_1 = S_2$
    \STATE Update $k_2$ by (\ref{eq.k2})
    \STATE $S_2 = \text{MSE}_{va}(k_2)$ // Call Algorithm~\ref{alg3.1}
    \ENDIF
    \ENDWHILE
    \STATE $l=n_0$, $S(l) = MSE_{va}(l)$, $S(l-1) = MSE_{va}(l-1)$ // Call Algorithm~\ref{alg3.1}
    \WHILE{$S(l-1) - S(l) \leq pen$}
	\STATE $l = l-1$
	\STATE $S(l-1) = MSE_{va}(l-1)$  // Call Algorithm~\ref{alg3.1}
	\ENDWHILE
    \RETURN $\hat{d} = l + 1, \hat{\beta} = \hat{w}_d$, $f_{\hat{d},m}(x,\hat{\theta},\hat{\beta})$
  \end{algorithmic}
\end{algorithm}

Then, the search for optimal SDR on the model $f_{\hat{d},m}(x,\hat{\theta},\hat{\beta})$ is established. The initial search interval is $[m_0,n_0]$, where $m_0=1$, $n_0=p$, and $p$ is the number of variables selected under a neural network. The number of nodes in the second hidden layer is $m$. Let the number of nodes in the first hidden layer of the three networks be $k_1$, $k_2$, and $n_0$, where
\begin{align}\label{eq.k1}
k_1 &= m_0 + \floor*{0.382(n_0 - m_0)},\\ 
k_2 &= m_0 + \floor*{0.618(n_0 - m_0)}. \label{eq.k2}
\end{align}

Train these three neural networks using Algorithm~\ref{alg3.1} to obtain the corresponding prediction models. Based on the theoretical analysis in the following, we may obtain the central space $\boldsymbol{S}_{Y|X}(\hat{\beta})$ and structural dimension $\hat{d}$ from Algorithm~\ref{alg3.2}.

The SDR procedure is explained in Algorithm~\ref{alg3.2}. The prediction model is $f_{\hat{d},m}(x, \hat{\theta},$ $\hat{\beta})$, where $\hat{d}$ is the number of nodes of the first hidden layer, $\hat{\beta}$ is the weight matrix of the first hidden layer, and $\hat{\theta}$ contains the set of parameters specified in Corollary~\ref{cor3.1}. Note that $\hat{d}$ is the prediction of the minimum of $d$, and $\hat{\beta}$ is a prediction of $\beta$ under dimension reduction. 

\begin{theorem}\label{theorem3.3}
Through Algorithm~\ref{alg3.2}, under the same conditions as Theorem~\ref{theorem3.2}, the predicted structural dimension $\hat{d} = d$ with probability going to 1 as $N \rightarrow \infty$. 
\end{theorem}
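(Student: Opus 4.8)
The plan is to reduce the analysis of Algorithm~\ref{alg3.2} to the behaviour of the penalized criterion $CR(k)=MSE_{va}(k)+k\cdot pen(N,n_{va})$ studied in Theorem~\ref{theorem3.2}, and to show that on a single high-probability event the entire data-dependent search trajectory is forced to terminate at $d$. First I would record the shape of the empirical validation curve. For $k<d$, Assumption~2 applies verbatim to the trained model, since $f_{k,m}(\cdot,\hat\theta_{k,N},\hat w_{k,N})$ has the form $\tilde g(\hat w_{k,N}^T\cdot)$ with $\hat w_{k,N}\in\mathbb R^{p\times k}$; hence $\|f-f_{k,m}\|\ge c$ holds surely, and concentration of the validation average (a mean of $n_{va}$ i.i.d.\ terms given the training sample) gives $MSE_{va}(k)\ge\sigma^2+c^2-O_p(n_{va}^{-1/2})$. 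For $k\ge d$, Corollary~\ref{cor3.1}(2) with the same concentration yields $MSE_{va}(k)=\sigma^2+O_p\big((\tfrac1N\log N)^{1/2}\big)+O_p(n_{va}^{-1/2})$. I define the event $\mathcal E_N$ on which, simultaneously over all $1\le k\le p$, these two estimates hold with explicit constants. Because there are only finitely many indices $k$ and finitely many restarts $t$ in Algorithm~\ref{alg3.1}, a union bound gives $P(\mathcal E_N)\to1$.

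Next I would translate these estimates into the two structural facts that drive the search, both valid on $\mathcal E_N$ for $N$ large. (i) \emph{Elevated plateau below $d$:} $CR(k)\ge CR(d)+c^2/4$ for every $k<d$. (ii) \emph{Strict monotonicity above $d$:} for $d\le k<k'\le p$, $MSE_{va}(k)-MSE_{va}(k')=O_p\big((\tfrac1N\log N)^{1/2}+n_{va}^{-1/2}\big)$, which is dominated by $(k'-k)\,pen$ since $pen=\big((\tfrac1N\log N)^{1/2}+n_{va}^{-1/2}\big)a_{N,n_{va}}$ with $a_{N,n_{va}}\to\infty$; hence $CR$ is strictly increasing on $\{d,\dots,p\}$, while the drop at the elbow satisfies $MSE_{va}(d-1)-MSE_{va}(d)\ge c^2/4>pen$ because $pen\to0$. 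The key observation is that every comparison the algorithm makes is a comparison of $CR$ values: the test $S_1-S_2\le(k_2-k_1)pen$ is equivalent to $CR(k_1)\le CR(k_2)$, and the endpoint test $S_2-S_{n_0}\le(n_0-k_2)pen$ is equivalent to $CR(k_2)\le CR(n_0)$.

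I would then prove, by induction over the golden-section iterations, the invariant $m_0<d\le n_0$. By (ii), if $d\le k_1$ then $CR(k_1)<CR(k_2)<CR(n_0)$, the IF-branch fires, and the new right endpoint $k_2\ge d$. Conversely, whenever $k_1<d$ the IF-condition fails: if $k_2\ge d$ its second test is violated because the plateau value $CR(k_1)$ cannot be undercut by $CR(k_2)\le CR(d)+(p-d)pen$, while if $k_2<d$ its endpoint test is violated because $CR(k_2)$ sits on the plateau above $CR(n_0)$ with $n_0\ge d$; control then passes to the ELSE-branch, which sets $m_0=k_1<d$ and leaves $n_0\ge d$. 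In either case $d$ is never discarded and the interval contracts by the golden factor, so after $O(\log p)$ steps the loop exits with $m_0<d\le n_0$ and $n_0-m_0\le3$. The concluding linear scan starts at $n_0\ge d$ and decreases $l$ through the flat region $\{k\ge d\}$, where each successive gap is $\le pen$ by (ii), halting precisely at the elbow detected by (i); this pins down $d$, so the reported dimension equals $d$. Since all of this holds on $\mathcal E_N$, we conclude $P(\hat d=d)\ge P(\mathcal E_N)\to1$.

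The step I expect to be the main obstacle is controlling the \emph{adaptivity} of the search: the indices visited are random because they depend on the estimated $MSE_{va}$, so one cannot analyse a fixed sequence of comparisons as in Theorem~\ref{theorem3.2}. I resolve this by making $\mathcal E_N$ uniform over all $k\in\{1,\dots,p\}$ and all $O(p^2)$ pairwise comparisons at once; on $\mathcal E_N$ every possible comparison has its correct sign, so the random trajectory becomes deterministic and provably correct. A secondary subtlety is the non-monotonicity of $CR$ on the plateau $\{1,\dots,d-1\}$, which a naive golden search could exploit to discard $d$ when $d$ lies near the right endpoint; the endpoint comparison against $n_0$ is exactly what prevents this, and verifying that this test keeps $n_0\ge d$ throughout is the crux of the invariant argument.
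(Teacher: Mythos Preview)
The paper does not actually supply a proof of Theorem~\ref{theorem3.3}: the appendix contains proofs for Propositions~1--3, Theorem~\ref{theorem3.1}, Corollary~\ref{cor3.1}, and Theorem~\ref{theorem3.2}, but nothing for this result, and the paragraph following the theorem only counts how many times Algorithm~\ref{alg3.1} is invoked (complexity, not correctness). In that sense your proposal goes well beyond what the paper offers. Your overall strategy---fix a single event $\mathcal E_N$ on which the penalized criterion $CR(k)$ is simultaneously controlled for all $k$, deduce the ``plateau below $d$ / strictly increasing from $d$'' shape from Corollary~\ref{cor3.1}, and then argue deterministically that the golden-section phase preserves $m_0<d\le n_0$ while the final sweep locates the elbow---is exactly the natural route and is fully in line with how the paper proves Theorem~\ref{theorem3.2}. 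The point you single out as the main obstacle (adaptivity of the visited indices) and your resolution via a uniform bound over all $k\le p$ are correct.

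Two places deserve more care, and both stem from the pseudocode rather than your strategy. First, as printed, line~5 of Algorithm~\ref{alg3.2} sets $n_0\leftarrow k_2$ and $S_{n_0}\leftarrow S_2$ \emph{before} the IF test; this makes the first clause $S_2-S_{n_0}\le(n_0-k_2)\,pen$ vacuous and, in the ELSE branch, forces the new $n_0$ to equal the old $k_2$ regardless, which can push $n_0$ below $d$ (and even yields $k_1>k_2$ after the update). Your invariant $m_0<d\le n_0$ therefore holds for the evidently intended algorithm in which the comparison precedes the update of $n_0$, but not for the literal listing; you should state explicitly which reading you are analysing. Second, under your own trace the linear scan exits the WHILE loop at $l=d$ (since $S(d-1)-S(d)>pen$ while all gaps with $l\ge d$ are $\le pen$), yet line~21 returns $\hat d=l+1$, which would give $d+1$. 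Either the return line is an off-by-one typo, or your sweep must be argued to stop at $l=d-1$; as written, ``halting precisely at the elbow'' and ``the reported dimension equals $d$'' are not simultaneously consistent with $\hat d=l+1$. Flagging and resolving these two points would make your argument complete.
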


When $n_0 - m_0 < 4$, Algorithm~\ref{alg3.2} executes Line 16. Then, Algorithm~\ref{alg3.1} is executed at most 3 times. For $n_0 - m_0 \geq 4$, suppose Algorithm~\ref{alg3.2} executes Algorithm~\ref{alg3.1} $s$ times. Then,
\begin{align}
    p(0.618)^s \leq 4  \\
    \Rightarrow s = \ceil{1.44(\log_{2}p - 2)}
\end{align}

Each NN in Algorithm~\ref{alg3.1} is executed $t$ times, so the total number of NN that Algorithm~\ref{alg3.2} trains is no more than $t(s+3) = t(\ceil{1.44(\log_{2}p - 2)} + 3)$. Furthermore, for a fixed initial neural network structure, the algorithmic complexity is $O(N)$ under gradient descent optimization.

\subsection{Practical Dimension Reduction}\label{ch3.3.4}
In real applications, the assumption on existence of $g(z)$ and $\beta$ that satisfies $f(x) = g(\beta^Tx)$ in Section~\ref{ch3.3.1} is stringent as the equality may be only an approximation, i.e., there exists a minimum integer $d$ and $\beta_d \in \mathbb{R}^{p \times d}$ such that $\big\|f(\cdot) - g(\beta_d^T\cdot)\big\| \leq \delta_N$, for some suitably small $\delta_N >0$. Due to this, we propose $\delta_N$-approximation SDR and define $d \overset{\Delta}{=} d(\delta_N)$ as the $\delta_N$-approximation structural dimension and the column space of $\beta_d$ as the $\delta_N$-approximation central space if the following two assumptions are satisfied.

Note that in high-dimensional statistical theory, for the purpose of capturing the real difficulties in estimation or model selection, the true regression function is often allowed to change with the sample size. \\
\noindent \textbf{Assumption 3:} There exists a function $g$ of dimension $d$, and $\beta_d \in \mathbb{R}^{p \times d}$ and a small positive constant $\delta_N$ such that
\begin{align}
    \big\|f(\cdot) - g(\beta_d^T \cdot)\big\| \leq \delta_N,
\end{align}

\noindent and there exists a constant $c_N > \delta_N$ such that for any function $\tilde{g}(z)$ of dimension $d-1$ and any $\alpha \in \mathbb{R}^{p \times (d-1)}$, we have
\begin{align}
    \big\|f(\cdot) - \tilde{g}(\alpha^T \cdot)\big\| \geq c_N.
\end{align}

This assumption is intuitive and appealing when $c_N \gg \delta_N$ and $\delta_N$ is small enough for good or near optimal performance after dimension reduction. We call the relaxed setup as practical dimension reduction. Theoretical results on the approximation bound for $\delta_N$-approximation can be established as follows.

\begin{proposition}\label{prop3.2}
Based on \textbf{Assumption 3}, when $g$ is in the Barron class with $C_g < \infty$, there exists a neural network model of two hidden layers, $f_{d,m}(x,\bar{\theta}, \beta_d)$ such that 
\begin{align}
    \big\|f(\cdot) - f_{d,m}(\cdot,\bar{\theta}, \beta_d)\big\| < \delta_N + \frac{2C_g}{\sqrt{m}},
\end{align} 
and for any neural network model of two hidden layers, $f_{k,m}(x,\theta,\alpha_k)$, where $k \leq d-1$, we have
\begin{align}
     \big\|f(\cdot) - f_{k,m}(\cdot,\theta,\alpha_k)\big\| \geq c_N.
\end{align}

\end{proposition}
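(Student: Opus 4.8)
The plan is to obtain both inequalities directly from \textbf{Assumption 3} combined with the approximation guarantee of Proposition~\ref{prop3.1}; once the correct auxiliary target function is identified, the proposition becomes essentially a two-line corollary, since it merely transports the population-level approximability and non-approximability encoded in \textbf{Assumption 3} through the representational form of the two-layer networks.

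For the upper bound, I would first introduce the auxiliary target $F(\cdot) \overset{\Delta}{=} g(\beta_d^T \cdot)$, which by \textbf{Assumption 3} satisfies $\|f(\cdot) - F(\cdot)\| \leq \delta_N$. Because $g$ lies in the Barron class with $C_g < \infty$, the function $F$ has exactly the composition form $g(\beta_d^T x)$ required by Proposition~\ref{prop3.1}; applying that proposition to $F$ produces a two-hidden-layer network $f_{d,m}(\cdot, \bar{\theta}, \beta_d)$ with $\|F(\cdot) - f_{d,m}(\cdot,\bar{\theta},\beta_d)\| \leq \frac{2C_g}{\sqrt{m}}$. The triangle inequality then gives
\begin{align*}
\big\|f(\cdot) - f_{d,m}(\cdot,\bar{\theta},\beta_d)\big\| \leq \big\|f(\cdot) - F(\cdot)\big\| + \big\|F(\cdot) - f_{d,m}(\cdot,\bar{\theta},\beta_d)\big\| \leq \delta_N + \frac{2C_g}{\sqrt{m}}.
\end{align*}
The strict inequality in the statement follows whenever either constituent bound is strict, which holds in the generic case where the true approximation error $\|f(\cdot) - F(\cdot)\|$ is strictly smaller than the chosen slack $\delta_N$.

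For the lower bound, the key observation is that every network $f_{k,m}(x,\theta,\alpha_k)$ with $k \leq d-1$ depends on $x$ only through $\alpha_k^T x$, i.e.\ it equals $h(\alpha_k^T x)$ for some $h:\mathbb{R}^k \to \mathbb{R}$ and $\alpha_k \in \mathbb{R}^{p \times k}$. When $k = d-1$ the lower-bound clause of \textbf{Assumption 3} applies immediately with $\tilde{g} = h$. When $k < d-1$ I would embed this into dimension $d-1$ by padding $\alpha_k$ with $d-1-k$ zero columns to obtain $\alpha \in \mathbb{R}^{p \times (d-1)}$ and defining $\tilde{g}(z_1,\dots,z_{d-1}) = h(z_1,\dots,z_k)$, so that $\tilde{g}(\alpha^T x) = h(\alpha_k^T x) = f_{k,m}(x,\theta,\alpha_k)$ identically. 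Hence $f_{k,m}$ is of the form $\tilde{g}(\alpha^T \cdot)$ with $\tilde{g}$ of dimension $d-1$, and \textbf{Assumption 3} yields $\|f(\cdot) - f_{k,m}(\cdot,\theta,\alpha_k)\| = \|f(\cdot) - \tilde{g}(\alpha^T\cdot)\| \geq c_N$.

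The only place needing minor care is this embedding step for $k < d-1$: one must confirm that the padded representation preserves the $L_2(\mu)$ value of the network exactly, which it does since the appended coordinates are identically zero, and that $\tilde{g}$ so constructed is admissible in \textbf{Assumption 3}, which it is, being an arbitrary function of $d-1$ variables that merely ignores its last $d-1-k$ arguments. I expect no obstacle of real substance, as the proposition is a faithful translation of the assumed approximation and separation properties into the network setting.
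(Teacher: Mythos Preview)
Your proposal is correct and follows essentially the same route as the paper: triangle inequality through the auxiliary target $g(\beta_d^T\cdot)$ for the upper bound (the paper invokes Barron's theorem directly rather than citing Proposition~\ref{prop3.1}, but that is the same content), and the zero-padding embedding of $\alpha_k$ into $\mathbb{R}^{p\times(d-1)}$ together with Assumption~3 for the lower bound.
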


Theoretical results on the error bound and structural dimension analysis are also established in the following.

\begin{theorem}\label{theorem3.4}
Suppose there exists $g(z)$ ($z \in \mathbb{R}^d$) and $w \in \mathbb{R}^{p \times d}$ such that $\big\|f(x) - g(w^Tx)\big\| < \delta_N$, and $C_g$ is the finite first absolute moment of the Fourier magnitude distribution of $g$. For a constant $C \geq C_g$, $\theta_N = \theta_{d,m,\epsilon,\delta,c}$, $\epsilon \sim \left(\frac{(m+p)d}{N}\right)^{1/2}$, $(\hat{\theta}_{d,N},\hat{w}_{d,N}) = \arg\min_{(\theta, w) \in \theta_N}(\frac{1}{N}\sum^{N}_{i=1}(y_i - f_{d,m}(x_i, \theta,w))^2 + \frac{L_{k,m}(\theta,w)}{N})$, the statistical risk of the prediction model, $f_{d,m}(x,\hat{\theta}_{d,N},\hat{w}_{d,N})$, satisfies 
\begin{align}
    E\big\|f(\cdot) - f_{d,m}(\cdot,\hat{\theta}_{d,N},\hat{w}_{d,N})\big\|^2 \leq \delta_N^2 + O\left(\frac{C_g^2}{m}\right) + O\left(\frac{(m+p)d}{N}\log\frac{N}{(m+p)d}\right).
\end{align}
\end{theorem}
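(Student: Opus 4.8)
The plan is to treat Theorem~\ref{theorem3.4} as the misspecified analogue of Theorem~\ref{theorem3.1}. The entire argument behind Theorem~\ref{theorem3.1} runs through an index-of-resolvability risk bound (Lemmas 3.1 and 3.2 in the appendix) that trades the squared approximation error over the discretized class $\Theta_N$ against a complexity/variance term, and nowhere does that bound require the target $f$ to be \emph{exactly} of the single-index Barron form. Hence I would reuse that risk bound for a general target $f$ and merely replace the realizable approximation error $2C_g/\sqrt m$ by the approximate one furnished by Proposition~\ref{prop3.2}.

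First I would record the risk bound that the proof of Theorem~\ref{theorem3.1} establishes for the penalized least-squares estimator over the $\epsilon$-net $\Theta_N = \Theta_{d,m,\epsilon,\delta,C}$, namely that for \emph{any} target $f$,
\begin{equation*}
  E\big\|f(\cdot) - f_{d,m}(\cdot,\hat\theta_{d,N},\hat w_{d,N})\big\|^2 \lesssim \inf_{(\theta,w)\in\Theta_N}\big\|f(\cdot) - f_{d,m}(\cdot,\theta,w)\big\|^2 + O\!\left(\frac{(m+p)d}{N}\log\frac{N}{(m+p)d}\right),
\end{equation*}
where the second term absorbs both the estimation variance and the discretization cost once $\epsilon \sim ((m+p)d/N)^{1/2}$ balances the $\epsilon$-covering perturbations, exactly as in Theorem~\ref{theorem3.1}. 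Because the $\epsilon$-net matching (\ref{eq.5})--(\ref{eq.9}) depends only on the network architecture and on the Lipschitz constant $a_1$ and bound $a_0$ of $\phi$, and not on $f$, this step carries over unchanged to the misspecified setting.

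Next I would bound the approximation term. Proposition~\ref{prop3.2} guarantees a two-hidden-layer network $f_{d,m}(x,\bar\theta,\beta_d)$ in the continuous class $\Theta_{d,m,\delta,C}$ with $\big\|f(\cdot)-f_{d,m}(\cdot,\bar\theta,\beta_d)\big\| < \delta_N + 2C_g/\sqrt m$. Matching $(\bar\theta,\beta_d)$ to its nearest discrete point in $\Theta_N$ perturbs the output by $O(\epsilon)$ (again via the Lipschitz bound on $\phi$), so the best discrete approximation error is at most $\delta_N + 2C_g/\sqrt m + O(\epsilon)$. Squaring, the $O(\epsilon)$ contributions are of order $\epsilon^2 \sim (m+p)d/N$, which the second term already dominates, while the main part expands to $\delta_N^2 + 4\delta_N C_g/\sqrt m + 4C_g^2/m$. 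Applying $2ab \leq \gamma a^2 + \gamma^{-1}b^2$ to the cross term with $\gamma$ small keeps the coefficient of $\delta_N^2$ equal to one and pushes the residual into $O(C_g^2/m)$, yielding the claimed bound $\delta_N^2 + O(C_g^2/m) + O\!\big(\tfrac{(m+p)d}{N}\log\tfrac{N}{(m+p)d}\big)$.

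I expect the genuinely delicate points to be bookkeeping rather than new ideas. The main one is confirming that the Theorem~\ref{theorem3.1} risk bound is stated in a form agnostic to realizability of $f$: if the appendix derivation used $f = g(w^Tx)$ only in order to invoke Proposition~\ref{prop3.1}, then substituting Proposition~\ref{prop3.2} is immediate, but I would verify that no step silently assumed zero approximation bias. The second, minor, subtlety is the cross term $4\delta_N C_g/\sqrt m$, which is exactly what pins down the clean leading constant $\delta_N^2$; the weighted arithmetic--geometric split handles it, and since $\delta_N$ is assumed small this loses nothing of practical relevance.
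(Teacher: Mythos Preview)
Your proposal is correct and follows exactly the route the paper implicitly intends: the appendix does not give a separate proof of Theorem~\ref{theorem3.4}, so the paper is evidently relying on the reader to rerun the Theorem~\ref{theorem3.1} argument (Lemmas~\ref{lem3.1}--\ref{lem3.2} and Barron's complexity-regularization risk bound) with Proposition~\ref{prop3.2} supplying the approximation term $\delta_N + 2C_g/\sqrt m$ in place of Proposition~\ref{prop3.1}'s $2C_g/\sqrt m$. Your identification of the two pieces that need checking (the risk bound is agnostic to realizability of $f$; the cross term is absorbed) is accurate, and both hold.

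One small caveat on the constant: your AM--GM split with ``$\gamma$ small'' gives $(1+\gamma)\delta_N^2$, not exactly $\delta_N^2$, and no fixed $\gamma>0$ yields leading coefficient one. The paper's own proof of Theorem~\ref{theorem3.1} uses the crude triangle-inequality factor $2$ (see (\ref{eq31})), so a faithful replay would in fact produce $2(\delta_N + 2C_g/\sqrt m)^2$ and hence an $O(\delta_N^2)$ rather than a sharp $\delta_N^2$. The stated form $\delta_N^2 + O(C_g^2/m)+\cdots$ appears to be informal on the paper's side; your argument is not missing an idea, but you should not claim the coefficient is literally one unless you invoke a sharper oracle inequality than the paper actually uses.
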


\begin{corollary}\label{cor3.2}
If \textbf{Assumption 3} holds, then the statistical risk of the prediction model $f_{k,m}(x,\hat{\theta}_{k,N},\hat{w}_{k,N})$ satisfies the following.
\begin{enumerate}
    \item If $k<d$, then 
    \begin{align} \label{eq3.97}
        E\big\|f(\cdot) - f_{k,m}(\cdot,\hat{\theta}_{k,N},\hat{w}_{k,N})\big\|^2 \geq c_N^2.
    \end{align}
    \item If $k \geq d$, then 
    \begin{align} 
        E\big\|f(\cdot) - f_{k,m}(\cdot,\hat{\theta}_{k,N},\hat{w}_{k,N})\big\|^2 &\leq \delta_N^2 + O\left(\frac{C_g^2}{m}\right) +  O\left(\frac{(m+p)k}{N}\log\frac{N}{(m+p)k}\right).
        \end{align}
        For $m \sim C_g\left(N/ \log N \right)^{\frac{1}{2}}$, 
    \begin{align} \label{eq3.99}
    E\big\|f(\cdot) - f_{k,m}(\cdot,\hat{\theta}_{k,N},\hat{w}_{k,N})\big\|^2  &\leq 
    \delta_N^2 + O\left((\frac{1}{N}\log N)^{1/2}\right). 
     \end{align}
     
\end{enumerate}
\end{corollary}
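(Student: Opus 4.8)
The plan is to handle the two cases of the corollary separately, reusing \textbf{Assumption 3} directly for the lower bound and reducing the upper bound to Theorem~\ref{theorem3.4} by an embedding argument, in exact parallel with how Corollary~\ref{cor3.1} follows from Theorem~\ref{theorem3.1}. The statistical risk is never re-derived from scratch; instead the role of $k$ versus $d$ is tracked through the two ingredients already in place, namely the Barron-type approximation bound (Proposition~\ref{prop3.2}) and the covering/complexity estimate for the discretized parameter class.

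For the case $k<d$, I would first make explicit the structural form of the fitted estimator. For every realization of the training sample, $f_{k,m}(\cdot,\hat{\theta}_{k,N},\hat{w}_{k,N})$ depends on $x$ only through $\hat{w}_{k,N}^T x$ with $\hat{w}_{k,N}\in\mathbb{R}^{p\times k}$ and $k\le d-1$, so it equals $\tilde{g}(\hat{w}_{k,N}^T\cdot)$ for the outer function $\tilde{g}(z)=\sum_{i=1}^m\tau_i\phi(u_i^Tz+v_i)+\tau_0$ on $\mathbb{R}^k$. Padding $\hat{w}_{k,N}$ with $d-1-k$ zero columns to $\alpha\in\mathbb{R}^{p\times(d-1)}$ and extending $\tilde{g}$ trivially exhibits the estimator as a function of $d-1$ linear combinations, so the second clause of \textbf{Assumption 3} applies and gives $\|f(\cdot)-f_{k,m}(\cdot,\hat{\theta}_{k,N},\hat{w}_{k,N})\|\ge c_N$ for every sample. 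Squaring and taking expectation yields $E\|f-f_{k,m}\|^2\ge c_N^2$, which is (\ref{eq3.97}).

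For the case $k\ge d$, I would embed the $d$-dimensional target into $k$ dimensions. Taking $g$ and $\beta_d$ from \textbf{Assumption 3}, define $\beta_k=[\beta_d\mid 0]\in\mathbb{R}^{p\times k}$ and $g_k(z_1,\dots,z_k)=g(z_1,\dots,z_d)$, so that $g_k(\beta_k^Tx)=g(\beta_d^Tx)$ and hence $\|f-g_k(\beta_k^T\cdot)\|<\delta_N$. The key computation is that the first absolute moment is preserved: since $g_k$ is constant in its last $k-d$ arguments, its Fourier magnitude distribution is supported on $\{\eta_{d+1}=\cdots=\eta_k=0\}$, on which $\|\eta\|_1$ reduces to the $\ell_1$-norm of the first $d$ coordinates, giving $C_{g_k}=C_g<\infty$. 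With these two facts the hypotheses of Theorem~\ref{theorem3.4} hold verbatim with network width $d$ replaced by $k$, and since that proof uses only the approximation bound $\|f-f_{k,m}\|<\delta_N+2C_g/\sqrt{m}$ (Proposition~\ref{prop3.2} applied to the embedded target) together with the complexity estimate for $\Theta_{k,m,\epsilon,\delta,C}$ (whose parameter count is of order $(m+p)k$), replacing $d$ by $k$ throughout yields $E\|f-f_{k,m}\|^2\le \delta_N^2+O\left(\frac{C_g^2}{m}\right)+O\left(\frac{(m+p)k}{N}\log\frac{N}{(m+p)k}\right)$. Substituting $m\sim C_g(N/\log N)^{1/2}$ makes the approximation term $O\left(\left(\frac{1}{N}\log N\right)^{1/2}\right)$, and because $k\le p$ is bounded, $(m+p)k/N$ is of order $(1/(N\log N))^{1/2}$ while the logarithmic factor is of order $\log N$, so the estimation term is of the same order; this gives (\ref{eq3.99}).

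The main obstacle I expect is the embedding step for $k\ge d$: one must verify carefully that $C_{g_k}=C_g$, handling the degenerate, delta-supported Fourier representation of $g_k$, and check that the constant $C\ge C_g$ defining the parameter class still dominates $C_{g_k}$, so that Theorem~\ref{theorem3.4} transfers with no change to any constant. By comparison, the $k<d$ lower bound and the final rate simplification are routine.
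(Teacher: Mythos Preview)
Your proposal is correct and follows the same line as the paper. The paper does not give a separate proof of Corollary~\ref{cor3.2}; it only proves the exact analogue Corollary~\ref{cor3.1}, whose argument is precisely what you wrote: for $k<d$ identify the fitted network with some $\tilde g(w^T\cdot)$ and invoke the lower-bound assumption (your zero-padding to width $d-1$ matches the paper's treatment in Proposition~\ref{prop3.2}), and for $k\ge d$ declare the proof ``similar'' to the corresponding risk theorem with $d$ replaced by $k$. Your embedding step and the verification that $C_{g_k}=C_g$ make explicit a detail the paper leaves tacit, but the approach is otherwise identical.
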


For the following result, we assume there is a separation of $\delta_N^2$ and $c_N^2$ in the sense that $c_N^2 = \omega(\delta_N^2 + (\frac{1}{N}\log{N})^{\frac{1}{2}}+\frac{1}{\sqrt{n_{va}}})$, where for positive sequences $a_n$ and $b_n$, $a_n = \omega(b_n)$ means $a_n/b_n \rightarrow \infty$ as $n \rightarrow \infty$. 

\begin{theorem}\label{theorem3.5}
If \textbf{Assumption 3} holds, and $c_N \gg \delta_N$, 
$m \sim C_g\left(N/ \log N \right)^{\frac{1}{2}}$, 
then for $pen(N, n_{va})$ satisfying
\begin{align}\label{eq.pen}
  \delta_N^2 + \left((\frac{1}{N}\log{N})^{\frac{1}{2}}+\frac{1}{\sqrt{n_{va}}}\right) \ll pen(N, n_{va}) \ll c_N^2,
\end{align} 
we have 
\begin{equation}
    P(\hat{d} = d) \rightarrow 1,
\end{equation}
where $\hat{d} = \arg\min_{1\leq k \leq p}CRA(k)$ and $CRA(k) = MSE_{va}(k) + k * pen(N,n_{va})$.
\end{theorem}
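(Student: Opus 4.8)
The plan is to mirror the consistency argument behind Theorem~\ref{theorem3.2}, now carrying the approximation error $\delta_N$ through every bound and replacing the constant separation of Corollary~\ref{cor3.1} by the $(\delta_N^2,c_N^2)$ separation of Corollary~\ref{cor3.2}. Write $\hat f_{k,m}$ for the trained model $f_{k,m}(\cdot,\hat\theta_{k,N},\hat w_{k,N})$. Since $\hat d=\arg\min_{1\le k\le p}CRA(k)$, it suffices to show $P(\exists\,k\neq d:\,CRA(k)\le CRA(d))\to 0$; because there are at most $p-1$ competitors, a union bound reduces this to proving $P(CRA(k)\le CRA(d))\to 0$ for each fixed $k\neq d$, which I split into the underfitting regime $k<d$ and the overfitting regime $k>d$. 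Writing $CRA(k)-CRA(d)=\big(MSE_{va}(k)-MSE_{va}(d)\big)+(k-d)\,pen(N,n_{va})$, the whole argument rests on controlling the validation-MSE difference against the linear penalty gap.

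The key preliminary step is a concentration statement linking $MSE_{va}(k)$ to the realized risk $\|f-\hat f_{k,m}\|^2$. Since the validation sample is independent of the trained model and the networks in $\Theta_{d,m,\epsilon,\delta,C}$ are uniformly bounded (through $|\phi(z)|\le a_0$ and $\sum_i|\tau_i|<C$), conditioning on the training data and applying a Hoeffding/Bernstein bound over the $n_{va}$ fresh points gives $MSE_{va}(k)=\sigma^2+\|f-\hat f_{k,m}\|^2+O_P(1/\sqrt{n_{va}})$, where the irreducible-noise term $\tfrac1{n_{va}}\sum_i\epsilon_i^2$ is common to every $k$ and therefore cancels in the difference $MSE_{va}(k)-MSE_{va}(d)$; the surviving noise cross-term is mean zero with conditional variance of order $1/n_{va}$ and is likewise $O_P(1/\sqrt{n_{va}})$ by boundedness. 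I then upgrade the expectation bounds of Corollary~\ref{cor3.2} to in-probability statements: Markov's inequality applied to the part~(2) bound gives $\|f-\hat f_{k,m}\|^2=O_P\big(\delta_N^2+(\tfrac1N\log N)^{1/2}\big)$ for every $k\ge d$ (in particular for $k=d$), while for $k<d$ the bound $\|f-\hat f_{k,m}\|^2\ge c_N^2$ holds deterministically, since every dimension-$k$ network is of the form $\tilde g(w^Tx)$ and is thus pinned below by \textbf{Assumption 3} (Proposition~\ref{prop3.2}).

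With these facts the two regimes follow by matching rates. For $k<d$ the difference obeys $MSE_{va}(k)-MSE_{va}(d)\ge c_N^2-O_P\big(\delta_N^2+(\tfrac1N\log N)^{1/2}+\tfrac1{\sqrt{n_{va}}}\big)$, and since $(k-d)\,pen\ge-(d-1)\,pen$ with $pen\ll c_N^2$ by the right inequality in (\ref{eq.pen}) and $d$ fixed, the separation $c_N^2=\omega\big(\delta_N^2+(\tfrac1N\log N)^{1/2}+\tfrac1{\sqrt{n_{va}}}\big)$ forces $CRA(k)-CRA(d)\ge c_N^2\,(1-o_P(1))>0$ with probability tending to one. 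For $k>d$ both models approximate $f$ equally well, so $MSE_{va}(k)-MSE_{va}(d)\ge-O_P\big(\delta_N^2+(\tfrac1N\log N)^{1/2}+\tfrac1{\sqrt{n_{va}}}\big)$, whereas the penalty now works in our favour, $(k-d)\,pen\ge pen$; the left inequality in (\ref{eq.pen}), $pen\gg\delta_N^2+(\tfrac1N\log N)^{1/2}+\tfrac1{\sqrt{n_{va}}}$, then yields $CRA(k)-CRA(d)\ge pen\,(1-o_P(1))>0$ with high probability. Collecting both regimes through the union bound gives $P(\hat d=d)\to 1$.

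I expect the main obstacle to be the overfitting regime, where no deterministic gap is available and consistency hinges entirely on calibrating the penalty against the sampling fluctuations of the validation-MSE difference: one must show simultaneously that the realized-risk difference and the noise cross-term are both $o_P(pen)$, which is exactly what the two-sided window (\ref{eq.pen}) is designed to guarantee, and which is why the separation $c_N\gg\delta_N$ (rather than merely $c_N>\delta_N$) is needed, leaving room for a penalty strictly between the two scales.
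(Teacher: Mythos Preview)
Your proposal is correct and follows essentially the same route the paper takes. The paper does not spell out a separate proof of Theorem~\ref{theorem3.5}; the intended argument is the obvious adaptation of the proof of Theorem~\ref{theorem3.2} (Section~\ref{proof: thm2}), replacing the exact-case bounds of Corollary~\ref{cor3.1} by the $\delta_N$-approximation bounds of Corollary~\ref{cor3.2}. Your decomposition $CRA(k)-CRA(d)=\big(MSE_{va}(k)-MSE_{va}(d)\big)+(k-d)\,pen$ and the split into $k<d$ and $k>d$ match the paper's decomposition $I+II+(k-d)\cdot pen$ in (\ref{eq3.58}); your concentration step linking $MSE_{va}(k)$ to $\|f-\hat f_{k,m}\|^2+O_P(1/\sqrt{n_{va}})$ is exactly what the paper invokes when it says ``$I$ is bounded away from $0$'' / ``$I$ is of order $O_p((\tfrac1N\log N)^{1/2})$'' and ``$II$ is of order $O_p(1/\sqrt{n_{va}})$''; and your use of the two sides of (\ref{eq.pen}) to kill the overfitting and underfitting regimes respectively is precisely the intended role of that sandwich condition.
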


\section{Simulations}\label{ch3.4}
In this section, we conduct a series of simulation studies to evaluate the finite sample performance of GRNN-SDR under multiple scenarios with a two-layer neural network of 20 nodes in the second hidden layer. We provide a well-rounded analysis from seven different aspects: noise level, sample size, computation time, data distribution, initial feature dimension size, and approximation analysis. Evaluation is based on the measurement of estimation accuracy, and the network is trained $t=3$ times for the experiments conducted in this work. We compare our method with several existing methods, including SAVE, CUME, fSIR, AIME, and NN-SDR for a  comprehensive analysis. The algorithm from \cite{liang2022nonlinear} and \cite{kapla2022fusing} are not directly provided, thus are omitted for comparisons.

\subsection{Evaluation Metrics}\label{ch3.4.1}
The estimation accuracy of a central space is evaluated based on the vector correlation coefficient, $r$ \cite{knyazev2002principal}. Suppose $\underline{\beta} \in \mathbb{R}^{p \times d}$ is an orthonormal basis of the central space (column space of $\beta$), and $\underline{\hat{\beta}} \in  \mathbb{R}^{p \times \hat{d}}$ is an orthonormal basis of the estimated central space (column space of $\hat{\beta}$), the vector correlation $r$ between
$\boldsymbol{S}_{Y|X}(\hat{\beta})$ and $\boldsymbol{S}_{Y|X}(\beta)$ is 
\begin{equation}
r = \sqrt{|\underline{\hat{\beta}}^T(\underline{\beta}{\underline{\beta}}^T)\underline{\hat{\beta}}|},
\end{equation}
where $|\cdot|$ is the determinant of the matrix. Note that $r \in [0,1]$, and if $\hat{d} = d$, then the the larger the value $r$ is, the closer $\boldsymbol{S}_{Y|X}(\hat{\beta})$ is to $\boldsymbol{S}_{Y|X}(\beta)$; if $\hat{d} < d$, then the larger the value $r$ is, the closer $\boldsymbol{S}_{Y|X}(\hat{\beta})$ is to a subspace of $\boldsymbol{S}_{Y|X}(\beta)$; if $\hat{d} > d$, then $r = 0$.

In addition, to assess prediction performance, we record the MSE and standard error, which are calculated based on 10 replications of each experiment.

\subsection{Simulation Study on Noise Level}\label{ch3.4.2}
In this experiment, we focus on analyzing and comparing the performances under various noise levels using simulated data of sample size 2000, with 1000 samples that is split into $80\%$ for training and $20\%$ for validation, and the remaining 1000 samples for testing. The data is generated from the following distribution.

\noindent  \textbf{Model 1}: Let $V = (V^{T}_{1}, V^{T}_{2}, \cdots, V^{T}_{p})^{T}$ be an orthogonal matrix where $V_1, V_2, \cdots, V_p$ are row vectors. Let $A = (1.01V^{T}_{1}, 1.01V^{T}_{2},  1.02V^{T}_{3}, 1.1V^{T}_{4}, 1.03V^{T}_{5})^{T}$, $Z = AX = (Z_1, Z_2, \cdots, Z_5)^{T} $, where $X \sim Uniform_{p}(-1,1)$ with $p = 20$.
\begin{align*} 
Y &= Z_3 - Z_1Z_5 + 0.5Z_2^{2} + (Z_3 + 0.5Z_4) / (1 + Z_1^{2})    
+ \exp(0.5(Z_3-Z_4)) \times \sin(Z_2 - Z_5 + 1.5Z_3) + c\epsilon,
\end{align*}
where $\epsilon$ is standard Gaussian noise independent of $X$, and $c$ is a constant.

We provide two sets of figures for analysis displayed in Fig.~\ref{figure3.1}. Fig. 1(a)-(d) illustrate the estimation accuracy $r$ over 10 replications. For each method, we provide the boxplot of the accuracy, together with mean $\pm$ one standard deviation on its right. Under the same setting, Fig. 1(e)-(h) shows how MSE changes based on the predictive model under each iterations with different number of nodes using the training, validation, and test data. The number of nodes is decreased based on the dynamic search until the MSE is the smallest. Specifically, it can be seen from the subfigures on the top row that GRNN-SDR achieves the best estimation accuracy ($r$) under all tested scenarios. Fig. 1(e)-(h) indicate that, despite the various noise levels, GRNN-SDR can efficiently reduce the number of nodes with only a few iterations. Furthermore, the MSE is the smallest at the true dimension, indicating that GRNN-SDR is indeed able to find the structural dimension successfully. 

\begin{figure*}[h]
  \small
  \centering
  \subfloat[]{
    \includegraphics[width=.22\textwidth,  trim=10 5 30 25,clip]{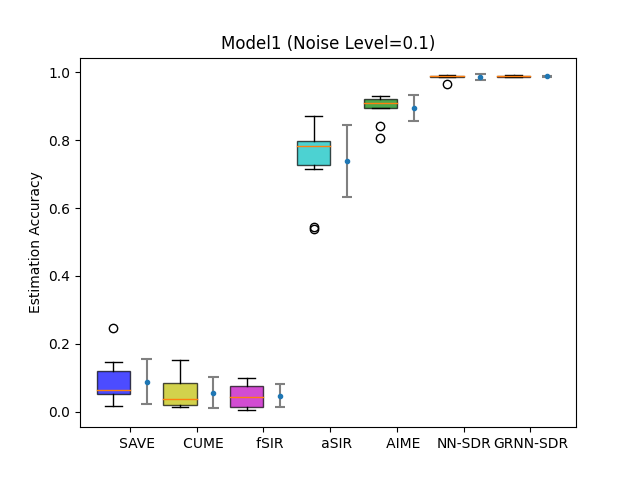}
    \label{fig:d}
  }
  \hfil
  \subfloat[]{
    \includegraphics[width=.22\textwidth, trim=10 5 30 25,clip]{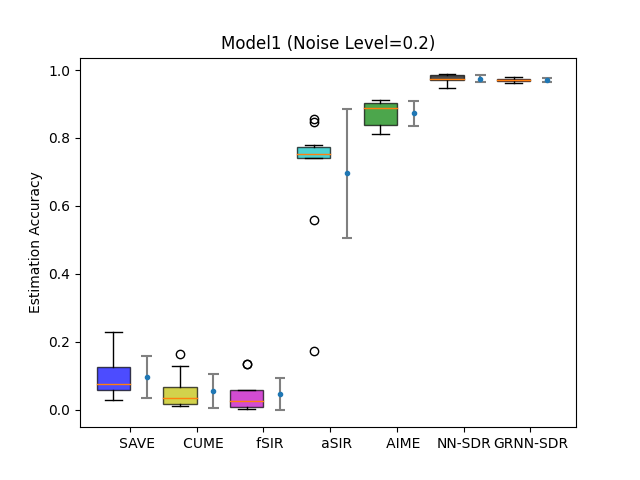}
    \label{fig:e}
  }
  \hfil
  \subfloat[]{
    \includegraphics[width=.22\textwidth, trim=10 5 30 25,clip]{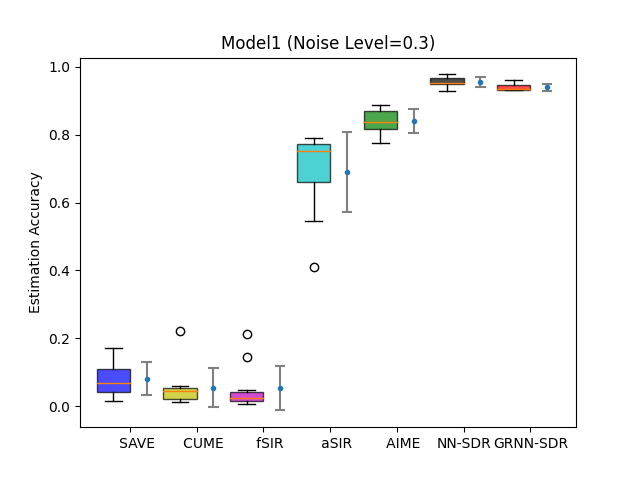}
    \label{fig:f}
  }
  \hfil
  \subfloat[]{
    \includegraphics[width=.22\textwidth,  trim=10 5 30 25,clip]{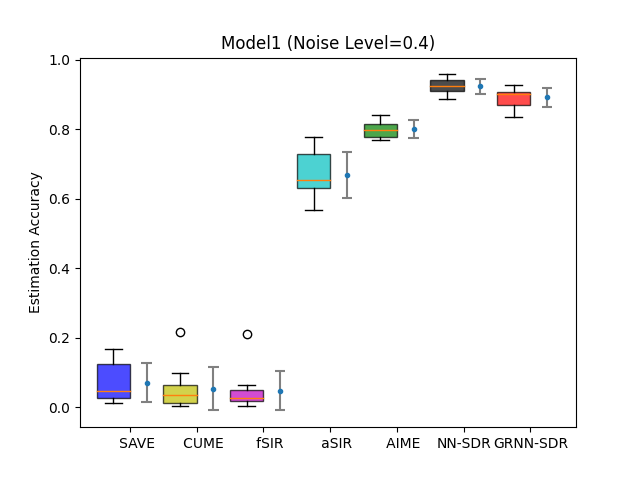}
    \label{fig:g}
  }
  \hfil
  \subfloat[]{
    \includegraphics[width=.22\textwidth,  trim=10 5 30 25,clip]{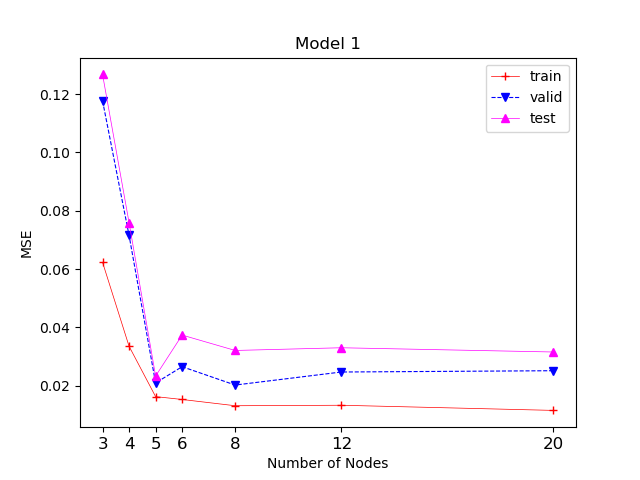}
    \label{fig:h}
  }
  \hfil
  \subfloat[]{
    \includegraphics[width=.22\textwidth,  trim=6 5 30 25,clip]{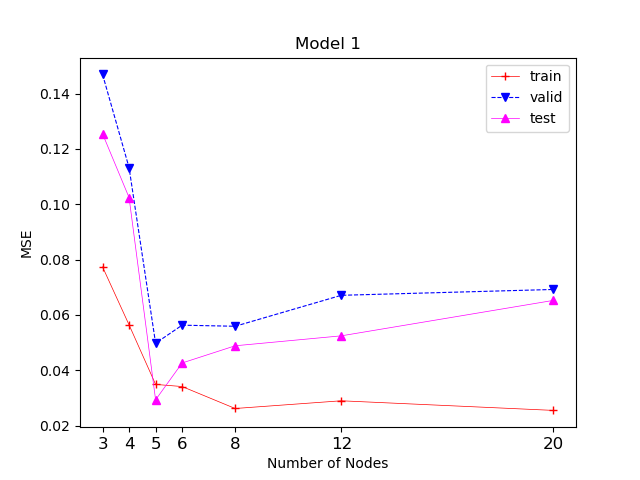}
    \label{fig:i}
  }
  \hfil
  \subfloat[]{
    \includegraphics[width=.22\textwidth,  trim=10 5 30 25,clip]{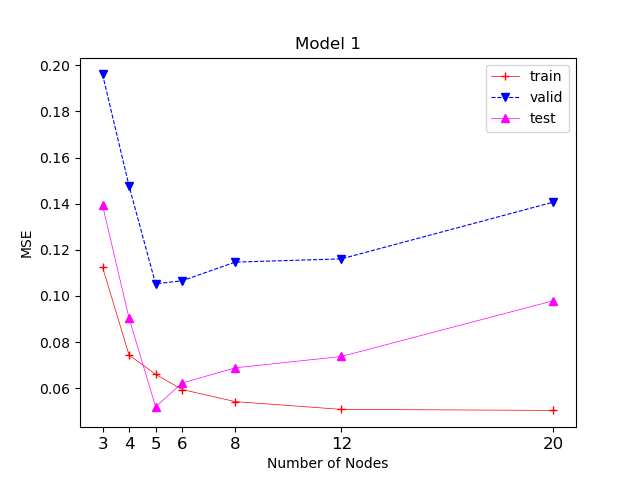}
    \label{fig:j}
  }
  \hfil
  \subfloat[]{
    \includegraphics[width=.22\textwidth,  trim=10 5 30 25,clip]{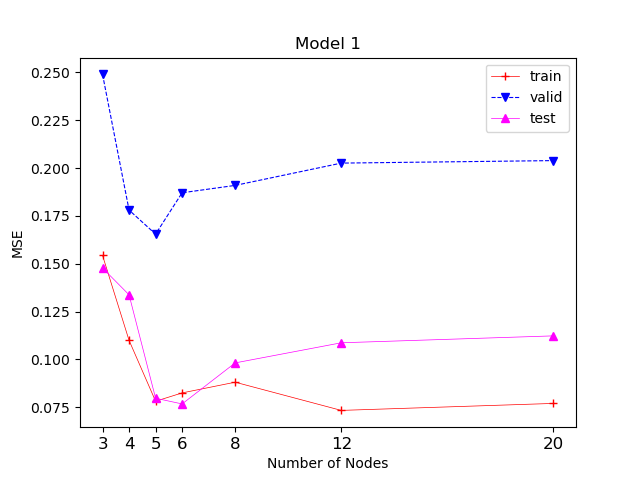}
    \label{fig:i}
  }
  \caption{Estimation accuracy ($r$) and MSE results for Model 1}
  \label{figure3.1}
\end{figure*}

\subsection{Simulation Study on Training Size and Computation Time}\label{ch3.4.3}
In this experiment, we evaluate the performance of different methods under various sample sizes for training and validation ($80\%$ for training and $20\%$ for validation). The test set sample size is fixed at 1000.

\noindent \textbf{Model 2}: Let $V = (V^{T}_{1}, V^{T}_{2}, \cdots, V^{T}_{p})^{T}$ be an orthogonal matrix where $V_1, V_2, \cdots, V_p$ are row vectors. Let $A = (V^{T}_{1}, V^{T}_{2}, V^{T}_{3}, V^{T}_{4}, V^{T}_{5})^{T}$, $Z = AX = (Z_1, Z_2, \cdots, Z_5)^{T} $, where $X \sim Uniform_{p}(-1,1)$, and $p = 20$.
\begin{align*} 
Y &= Z_5/(5+(1-0.2Z_3)^{2}) + \exp(0.5Z_1 + Z_2) + 2Z_1Z_4Z_5 \\
&+ (Z_4 - 0.5Z_1 + Z_2)\times \cos(0.5Z_3) + 0.2\sin(Z_1+Z_5) + 0.1\epsilon,
\end{align*}
where $\epsilon$ is standard Gaussian noise independent of $X$.

\begin{figure*}[!ht]
  \small
  \centering
  \subfloat[]{
    \includegraphics[width=.22\textwidth,  trim=5 5 35 25,clip]{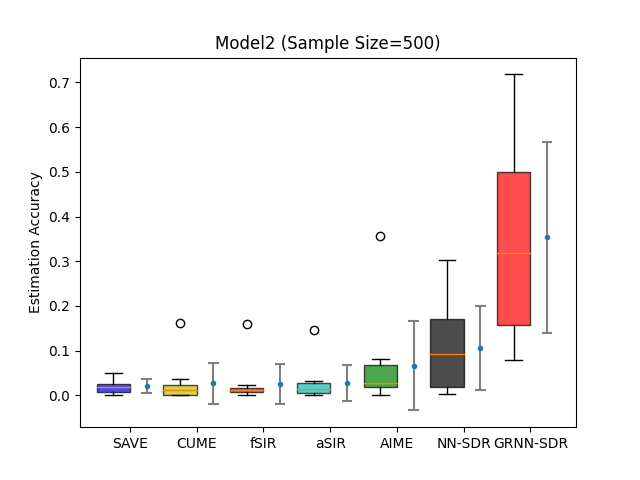}
    \label{fig:d}
  }
  \hfil
  \subfloat[]{
    \includegraphics[width=.22\textwidth, trim=10 5 35 25,clip]{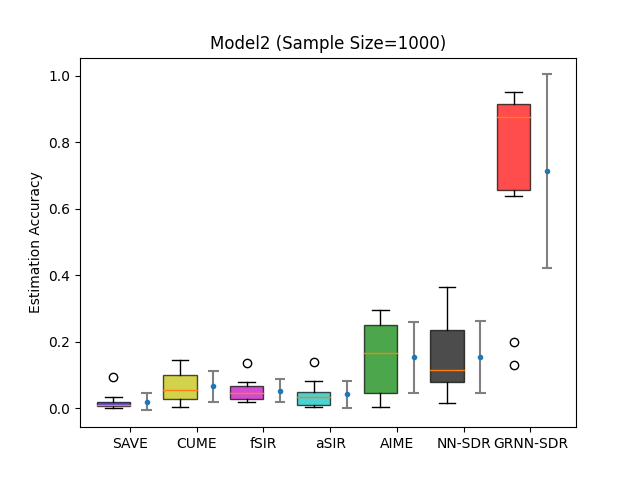}
    \label{fig:e}
  }
  \hfil
  \subfloat[]{
    \includegraphics[width=.22\textwidth, trim=10 5 35 25,clip]{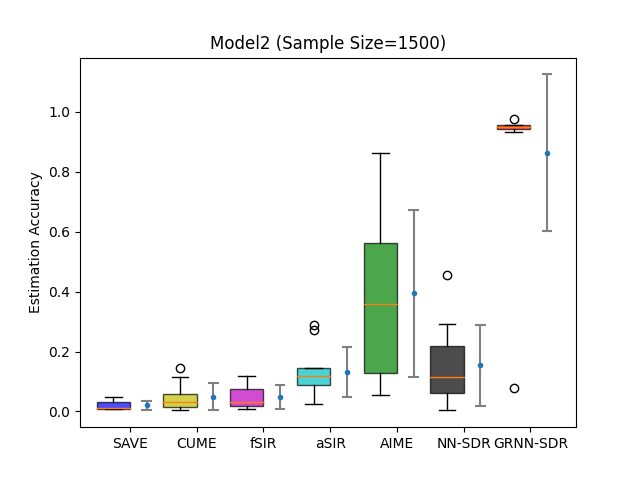}
    \label{fig:f}
  }
  \hfil
  \subfloat[]{
    \includegraphics[width=.22\textwidth,  trim=10 5 35 25,clip]{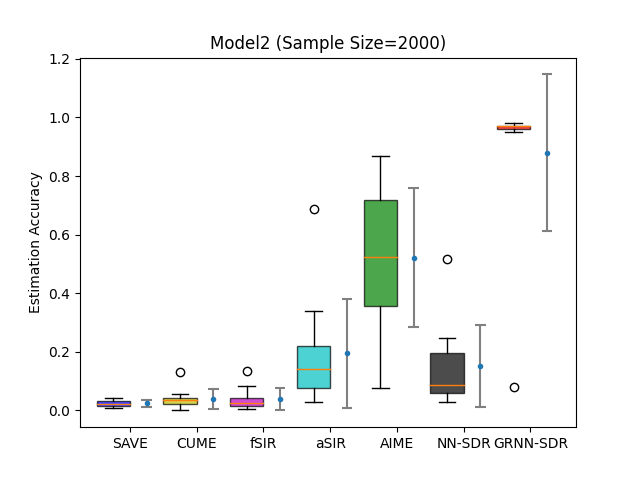}
    \label{fig:g}
  }
  \hfil
  \subfloat[]{
    \includegraphics[width=.22\textwidth,  trim=10 5 35 25,clip]{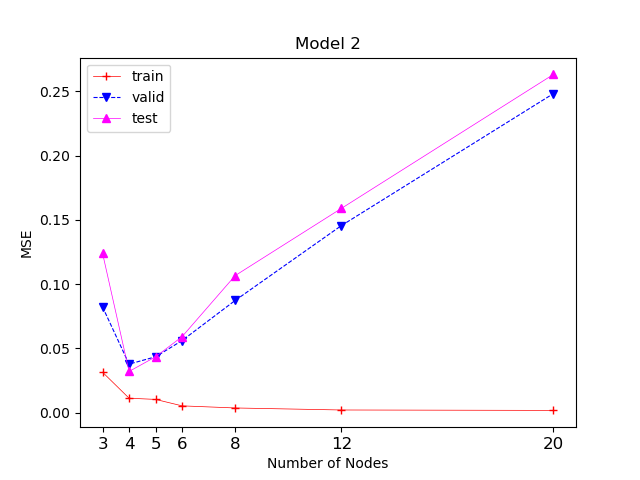}
    \label{fig:h}
  }
  \hfil
  \subfloat[]{
    \includegraphics[width=.22\textwidth,  trim=10 5 35 25,clip]{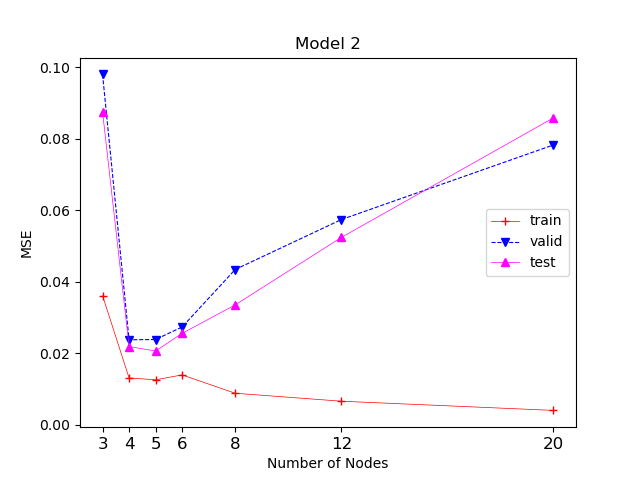}
    \label{fig:i}
  }
  \hfil
  \subfloat[]{
    \includegraphics[width=.22\textwidth,  trim=10 5 35 25,clip]{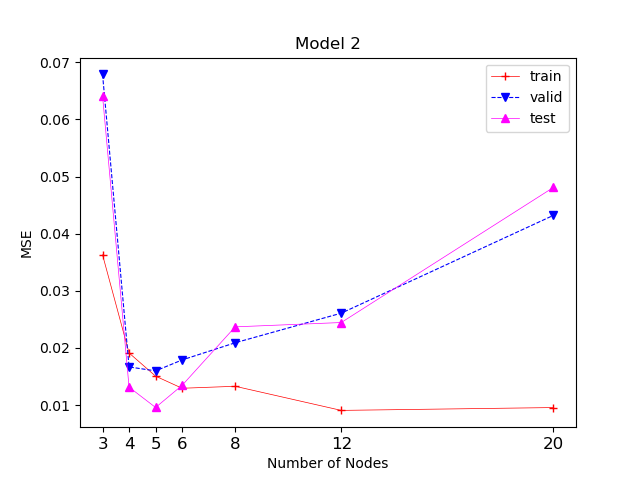}
    \label{fig:j}
  }
  \hfil
  \subfloat[]{
    \includegraphics[width=.22\textwidth,  trim=10 5 35 25,clip]{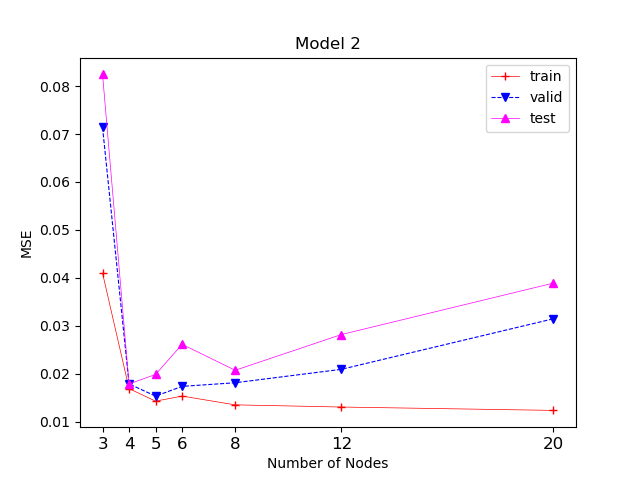}
    \label{fig:i}
  }
  \caption{Estimation accuracy ($r$) and MSE results for Model 2}
  \label{figure3.2}
\end{figure*}

Fig. 2(a)-(d)  shows that, with a small sample size, the estimation accuracy of GRNN-SDR is not as stable in comparison to the other methods. This is reasonable because neural networks are generally more complex and require more data. As the training size increases, GRNN-SDR performs significantly better than the other methods and continues to achieve the best estimation accuracy for all training sizes studied in the experiment. Fig. 2(e)-(h) again demonstrate how the optimal prediction performance is achieved at the true structural dimension.

\begin{figure}
        \centering
        \subfloat[]{\includegraphics[width=0.4\linewidth]{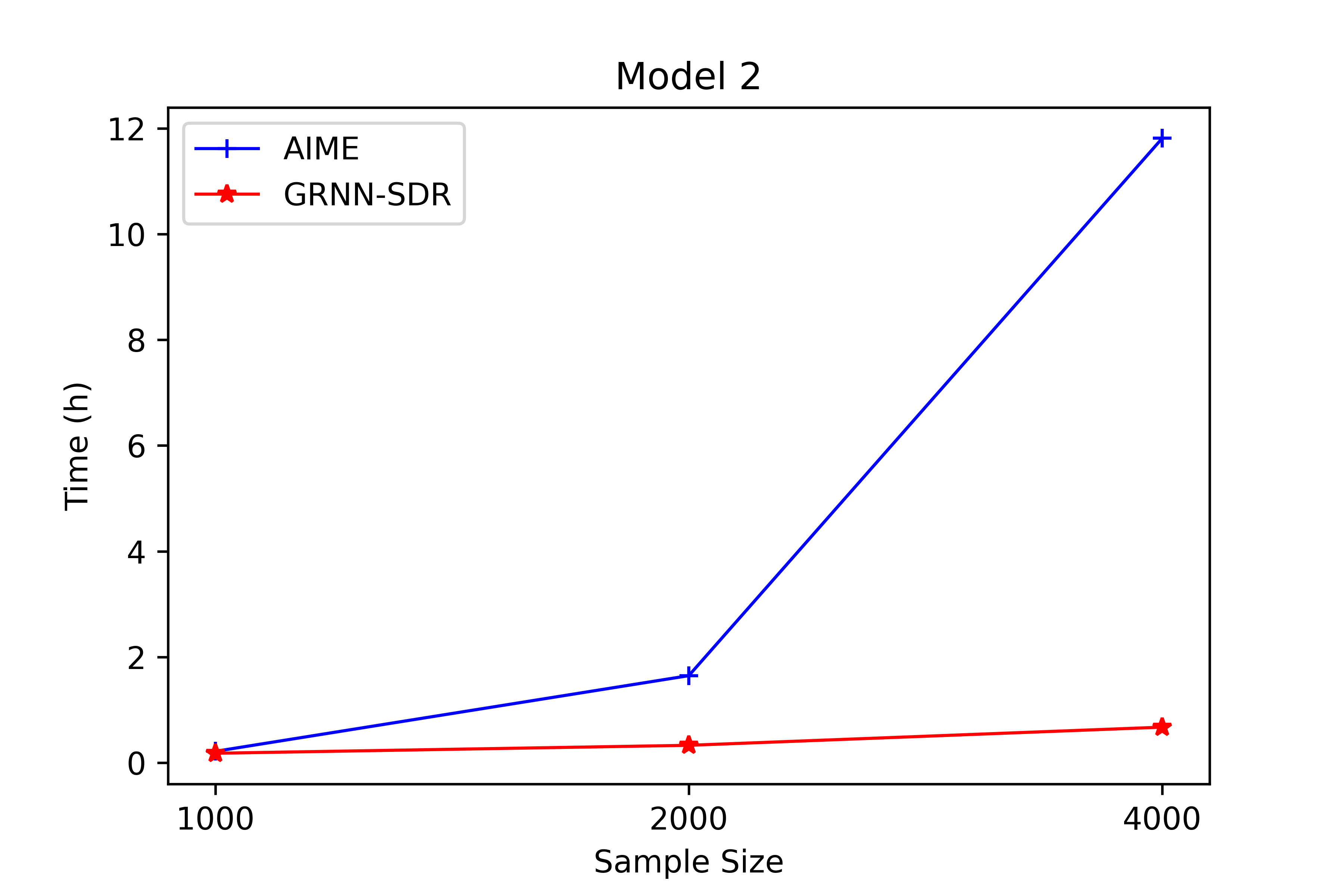}}%
        \hfill
        \subfloat[]{\includegraphics[width=0.4\linewidth]{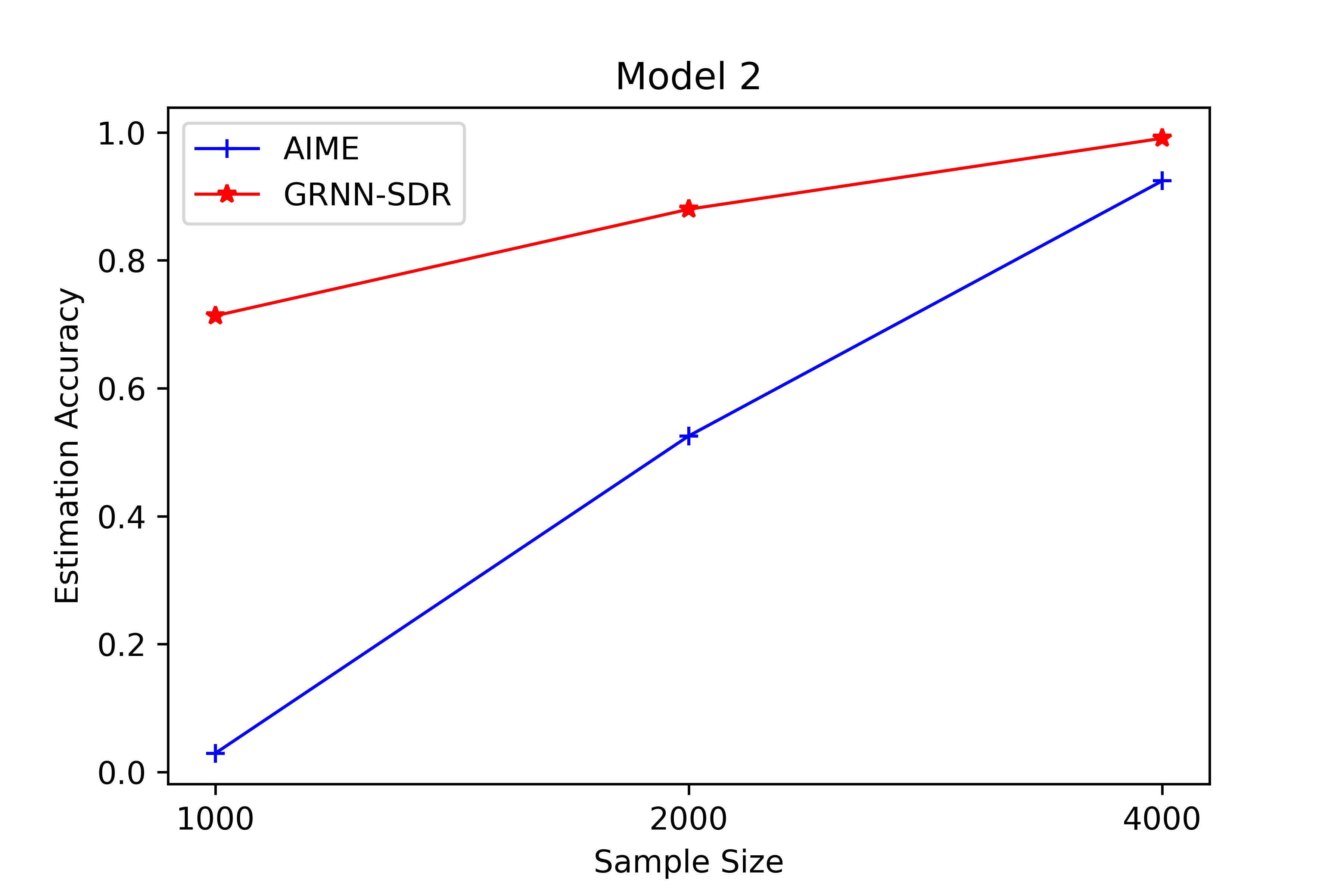}}
        \caption{Estimation accuracy ($r$) and computation time for Model 2}
          \label{figure3.2.1}
\end{figure}

In addition, we conducted a comparison of the computation time between GRNN-SDR and AIME using sample sizes of $1000$, $2000$, and $4000$. As illustrated in Fig. 3(a), the computation time of AIME increases dramatically as the sample size increases to $4000$, whereas the computation time of GRNN-SDR only increases linearly. This demonstrates the efficiency of GRNN-SDR, as its algorithmic complexity is $O(N)$ for a fixed neural network structure. Moreover, Fig. 3(b) also shows that, despite the difference in computation time, GRNN-SDR still achieves better estimation of the central space. 

\subsection{Simulation Study on Covariate Distribution}\label{ch3.4.4}
For a comparison study with state-of-the-art statistical methods in the literature, we replicate an experiment originated from \cite{xia2002adaptive} with Model 3. A recent experiment with more model comparisons is also shown in \cite{wang2020aggregate}. 
The data is randomly split into 1000 samples for training and validation ($80\%$ for training and $20\%$ for validation), and 1000 samples for testing.

\noindent \textbf{Model 3} \cite{xia2002adaptive}: Let $\beta_1 = (1,2,3,4,0,0,0,0,0,0)^{T}/\sqrt{30},\beta_2 = (-2,1,-4,3,1,2,0,0,0,0)^{T}/\sqrt{35}$, $\beta_3 = (0,0,0,0,2,-1,2,1,2,1)^{T}/\sqrt{15}$,
$\beta_4 = (0,0,0,$ $0,0,0,-1,-1,1,1)^{T}/2$. 
Let $A = (\beta_1, \beta_2, \beta_3, \beta_4)^{T}$, $Z = AX = (Z_1, Z_2, Z_3, Z_4)^{T}$, where $X \sim N_{p}(0,\Sigma)$, p = 10,  and $\Sigma = I_{p}$. 
\begin{equation*} 
Y = Z_1(Z_2)^{2} + Z_3Z_4 + 0.5\epsilon,
\end{equation*}
where $\epsilon$ is standard Gaussian noise independent of $X$. \\

\begin{figure} 
        \centering
        \subfloat[]{\includegraphics[width=0.4\linewidth]{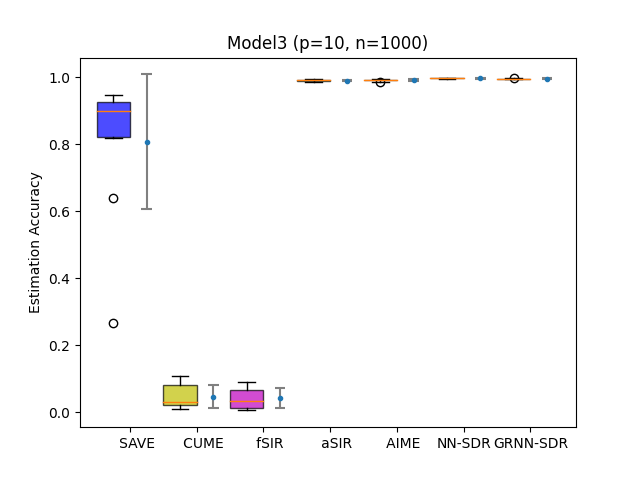}}%
        \hfill
        \subfloat[]{\includegraphics[width=0.4\linewidth]{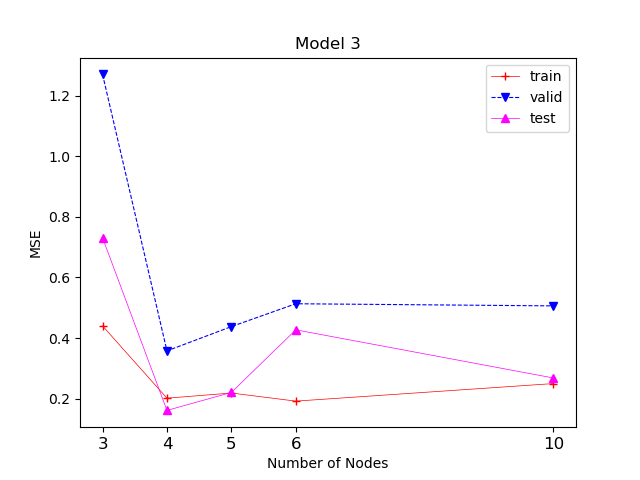}}
        \caption{Estimation accuracy ($r$) and MSE results for Model 3}
          \label{figure3.3}
\end{figure}

\subsection{Simulation Study on Feature Dimension Size}\label{ch3.4.5}
For Models 4 and 5, we aim to extend the analysis to higher feature dimensions, as many experimental studies in the state-of-the-art statistical methods literature have only reported results up to 20 features. To achieve this, we vary the feature dimensions and assess the performances of GRNN-SDR under different settings. The data is randomly split into 1000 samples for training and validation ($80\%$ for training and $20\%$ for validation), and 1000 samples for testing.

\noindent \textbf{Model 4}: Let $V = (V^{T}_{1}, V^{T}_{2}, \cdots, V^{T}_{p})^{T}$ be an orthogonal matrix where $V_1, V_2, \cdots, V_p$ are row vectors. Let $A = (V^{T}_{1}, V^{T}_{2}, V^{T}_{3})^{T}$, $Z = AX = (Z_1, Z_2,Z_3)^{T} $, where $X \sim N_{p}(0,\Sigma)$, and $\Sigma = I_{p}$. 
\begin{equation*} 
Y = 0.5Z_1Z_2 + \sin(Z_1-Z_3) + \cos(Z_2+Z_3) + 0.1\epsilon,
\end{equation*}
where $\epsilon$ is standard Gaussian noise independent of $X$.\\
\noindent \textbf{Model 5}: Same as Model 4 except $X \sim Uniform_{p}(-1,1)$.

\begin{figure*}[!ht]
  \small
  \centering
  \subfloat[]{
    \includegraphics[width=.22\textwidth,  trim=10 5 35 25,clip]{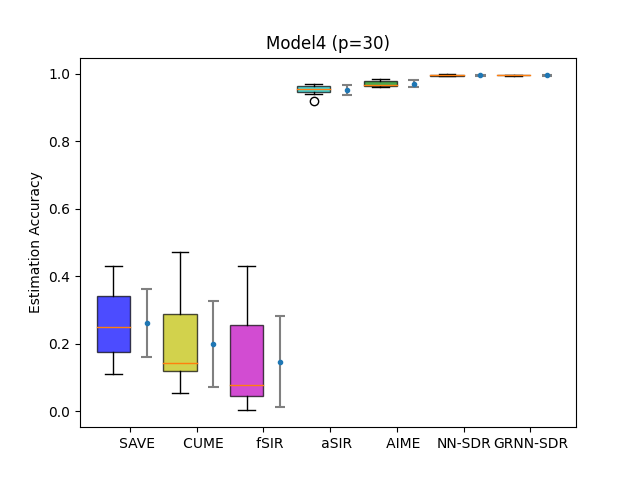}
    \label{fig:d}
  }
  \hfil
  \subfloat[]{
    \includegraphics[width=.22\textwidth, trim=10 5 35 25,clip]{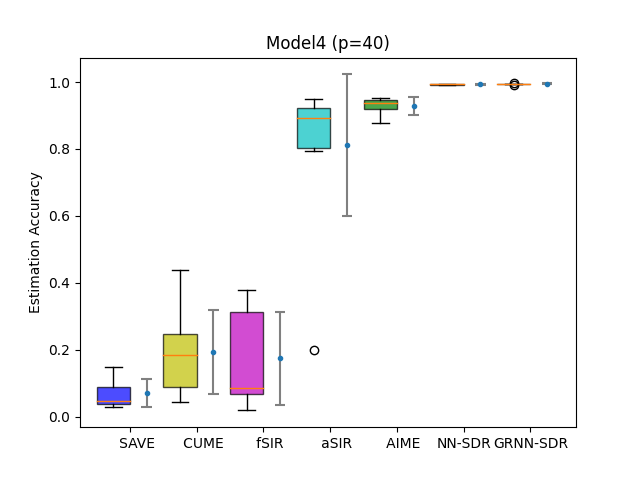}
    \label{fig:e}
  }
  \hfil
  \subfloat[]{
    \includegraphics[width=.22\textwidth, trim=10 5 35 25,clip]{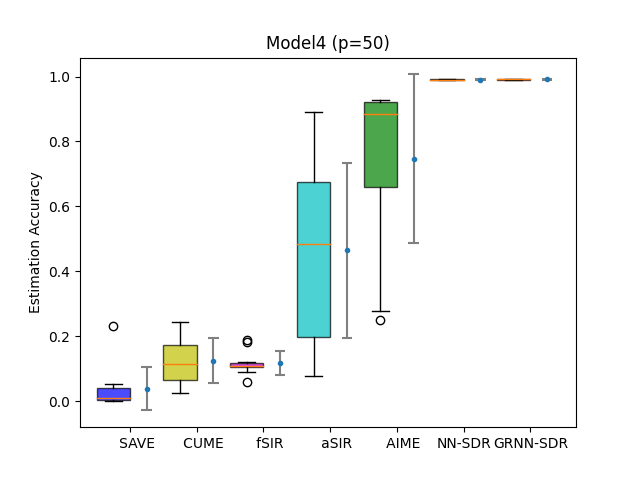}
    \label{fig:f}
  }
  \hfil
  \subfloat[]{
    \includegraphics[width=.22\textwidth,  trim=10 5 35 25,clip]{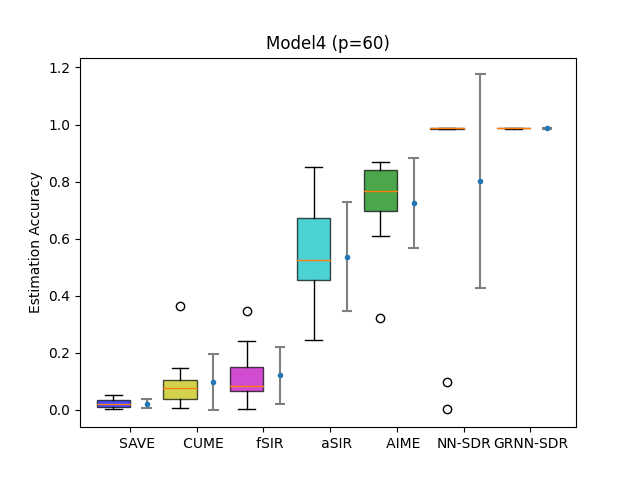}
    \label{fig:g}
  }
  \hfil
  \subfloat[]{
    \includegraphics[width=.22\textwidth,  trim=10 5 35 25,clip]{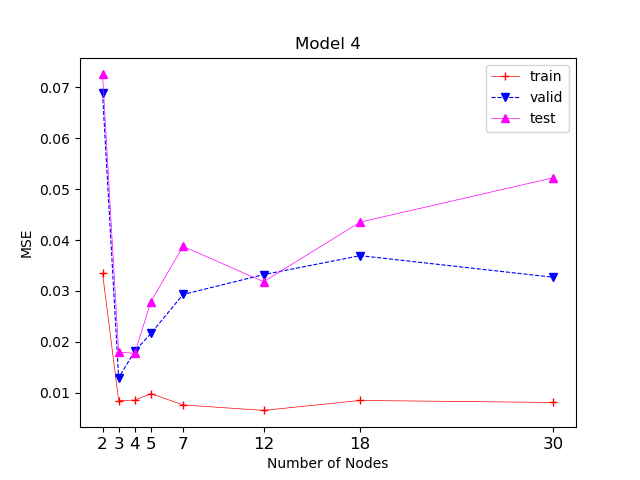}
    \label{fig:h}
  }
  \hfil
  \subfloat[]{
    \includegraphics[width=.22\textwidth,  trim=10 5 35 25,clip]{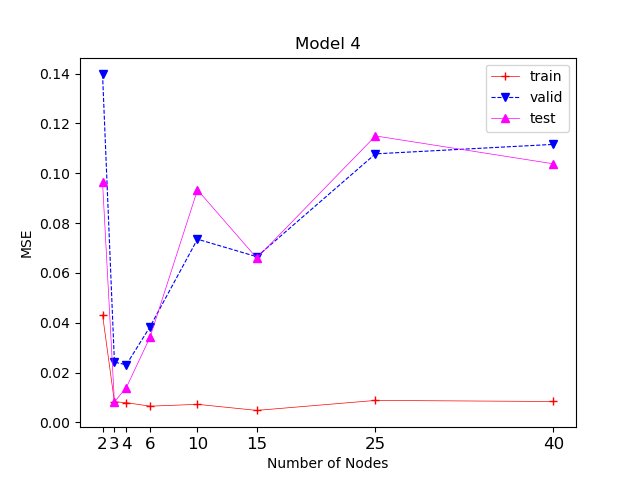}
    \label{fig:i}
  }
  \hfil
  \subfloat[]{
    \includegraphics[width=.22\textwidth,  trim=10 5 35 25,clip]{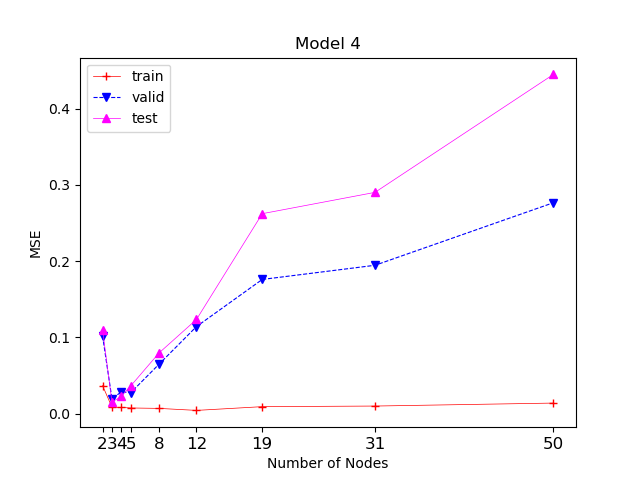}
    \label{fig:j}
  }
  \hfil
  \subfloat[]{
    \includegraphics[width=.22\textwidth,  trim=10 5 35 25,clip]{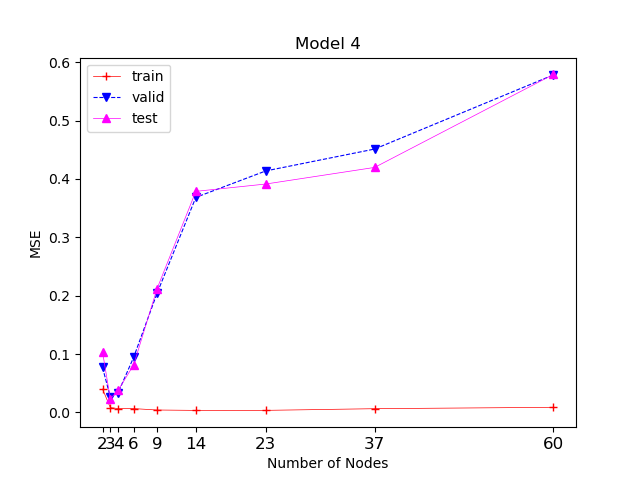}
    \label{fig:i}
  }
  \caption{Estimation accuracy ($r$) and MSE results for Model 4}
  \label{figure3.5}
\end{figure*}

The results presented in Fig. 5(a)-(d) demonstrate that GRNN-SDR and NN- SDR both consistently outperforms the other five methods in terms of estimation accuracy across all feature dimensions. However, under the uniform data distribution,  we can see from the top row of Fig.~\ref{figure3.6} that GRNN-SDR maintains substantially higher stability in comparison to NN-SDR especially as the feature dimension increases to 50 and 60. Fig. 5(e)-(h) and Fig. 6(e)-(h) shows how the number of nodes changes in each dynamic search to successfully identify the structural dimension. Moreover, the bottom row illustrates the algorithm's ability to dynamically search and obtain the structural dimension efficiently. For instance, when the feature dimension is 60, the structural dimension is reduced to 3 after only six iterations, illustrating the effectiveness and efficiency of the GRNN-SDR method.

\begin{figure*}[!ht]
  \small
  \centering
  \subfloat[]{
    \includegraphics[width=.22\textwidth,  trim=10 5 35 25,clip]{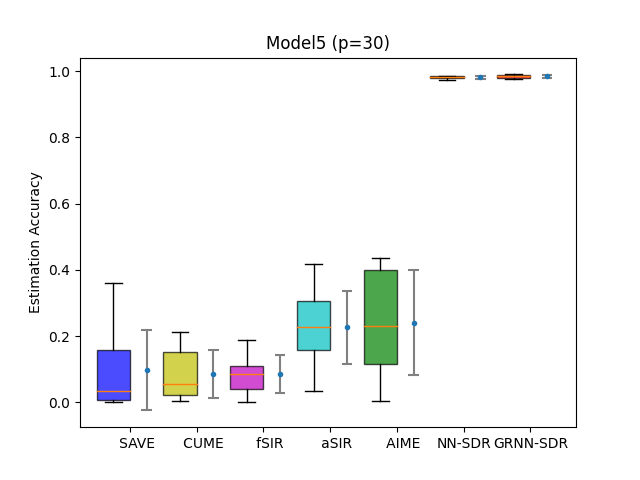}
    \label{fig:d}
  }
  \hfil
  \subfloat[]{
    \includegraphics[width=.22\textwidth, trim=10 5 35 25,clip]{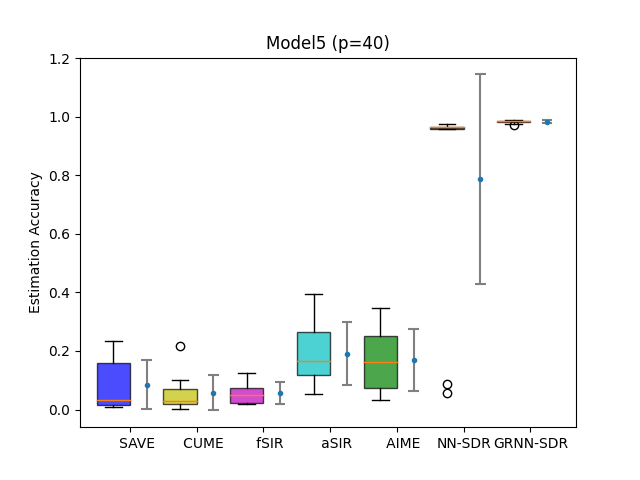}
    \label{fig:e}
  }
  \hfil
  \subfloat[]{
    \includegraphics[width=.22\textwidth, trim=10 5 35 25,clip]{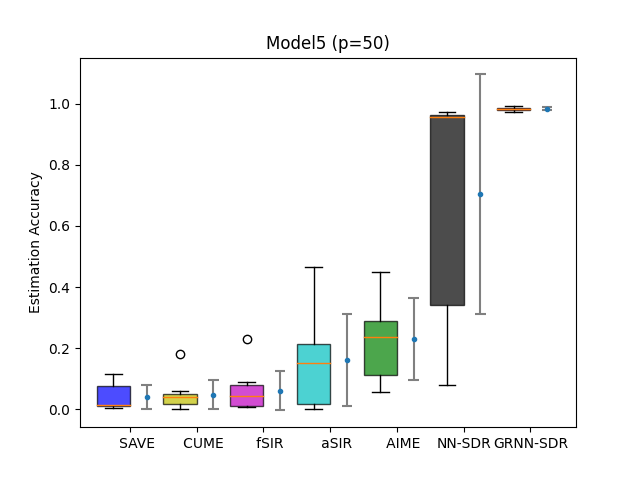}
    \label{fig:f}
  }
  \hfil
  \subfloat[]{
    \includegraphics[width=.22\textwidth,  trim=10 5 35 25,clip]{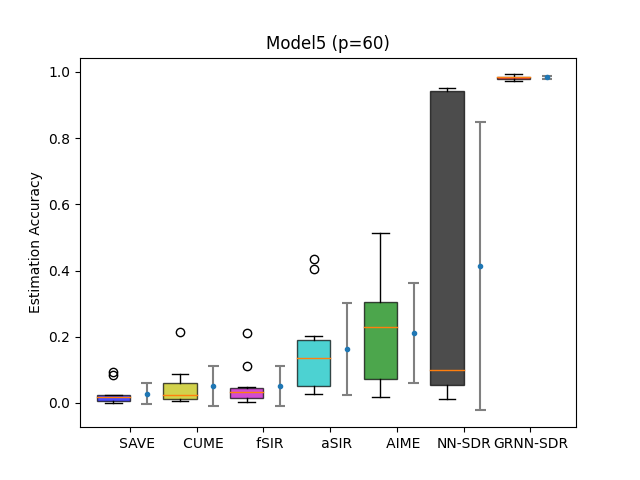}
    \label{fig:g}
  }
  \hfil
  \subfloat[]{
    \includegraphics[width=.22\textwidth,  trim=1 5 35 25,clip]{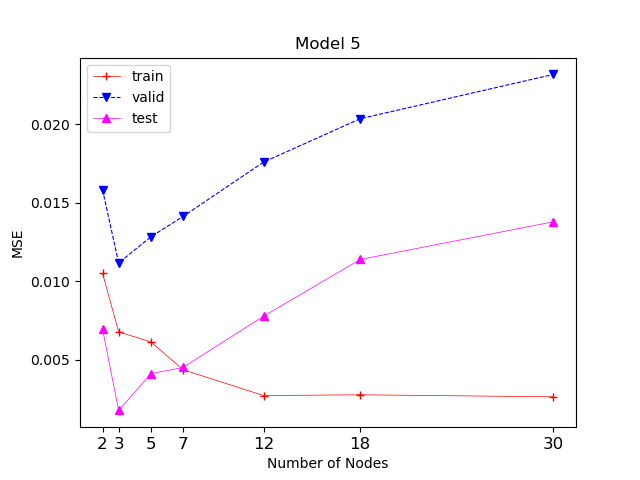}
    \label{fig:h}
  }
  \hfil
  \subfloat[]{
    \includegraphics[width=.22\textwidth,  trim=6 5 35 25,clip]{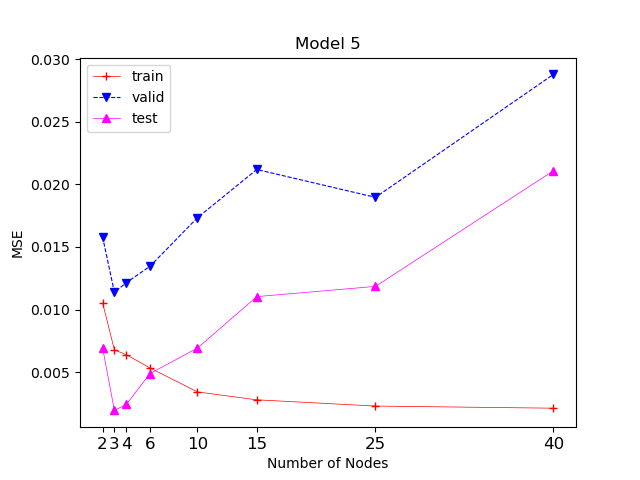}
    \label{fig:i}
  }
  \hfil
  \subfloat[]{
    \includegraphics[width=.22\textwidth,  trim=10 5 35 25,clip]{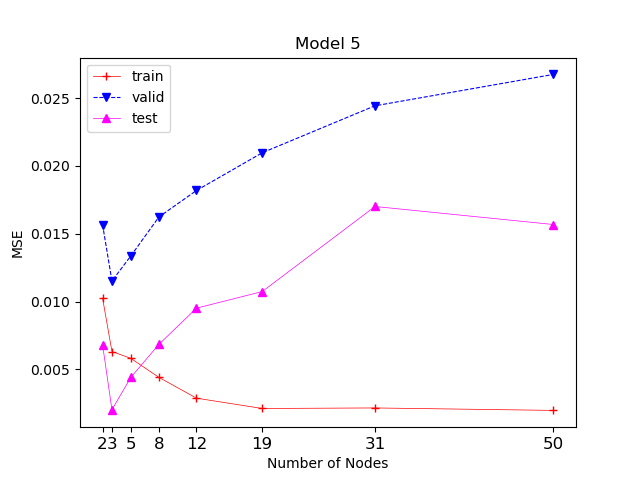}
    \label{fig:j}
  }
  \hfil
  \subfloat[]{
    \includegraphics[width=.22\textwidth,  trim=10 5 35 25,clip]{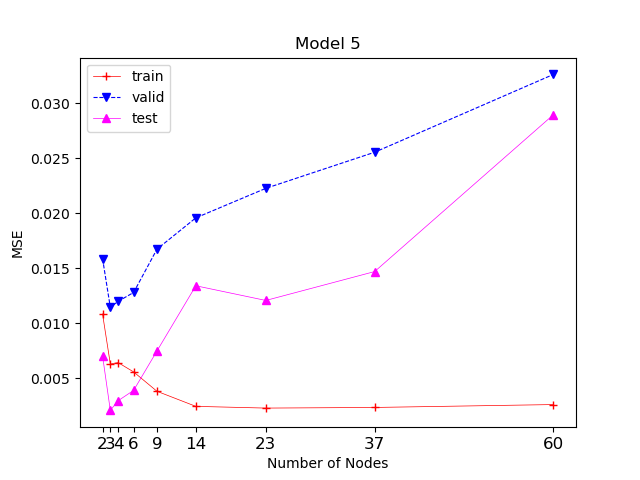}
    \label{fig:i}
  }
  \caption{Estimation accuracy ($r$) and MSE results for Model 5}
  \label{figure3.6}
\end{figure*}

\subsection{Simulation Study on Practical Dimension Reduction}\label{ch3.4.6}
Models 6 and 7 are used for simulation studies on approximation analysis for the GRNN-SDR method based on Model 1. In Model 6, we add another dimension to Model 1, but the term containing this term has minimal impact on $Y$. Therefore, under approximation, the structural dimension should be 5. In Model 7, we reduce the impact on a dimension in Model 1 indicating that, under approximation, the structural dimension should be 4. The data is split into a sample size of 2000 for training and validation ($80\%$ for training and $20\%$ for validation), and 1000 samples for testing. 

\noindent \textbf{Model 6}: Let $V = (V^{T}_{1}, V^{T}_{2}, \cdots, V^{T}_{p})^{T}$ be an orthogonal matrix where $V_1, V_2, \cdots, V_p$ are row vectors and $p = 20$. Let $A = (1.01V^{T}_{1}, 1.01V^{T}_{2}, 1.02V^{T}_{3}, 1.1V^{T}_{4}, 1.03V^{T}_{5}, 1.01V^{T}_{6})^{T}$, $Z = AX = (Z_1, Z_2, \cdots, Z_6)^{T} $, where $X \sim Uniform_{p}(-1,1)$.
\begin{align*} 
Y &= Z_3 - Z_1Z_5 + 0.5Z_2^{2} + (Z_3 + 0.5Z_4) / (1 + Z_1^{2})  + \exp(0.5(Z_3-Z_4)) \\ &\times \sin(Z_2 - Z_5 + 1.5Z_3) + 0.001(Z_6^{2})(Z_1^{2}+Z_2^{2}+Z_3^{2}) + 0.1\epsilon,
\end{align*}
where $\epsilon$ is standard Gaussian noise independent of $X$.

\noindent \textbf{Model 7}: Let $V = (V^{T}_{1}, V^{T}_{2}, \cdots, V^{T}_{p})^{T}$ be an orthogonal matrix where $V_1, V_2, \cdots, V_p$ are row vectors and $p = 20$. Let $A = (0.01V^{T}_{1}, 1.01V^{T}_{2}, 1.02V^{T}_{3}, 1.1V^{T}_{4}, 1.03V^{T}_{5})^{T}$, $Z = AX = (Z_1, Z_2, \cdots, Z_5)^{T} $, where $X \sim Uniform_{p}(-1,1)$.
\begin{align*} 
Y &= Z_3 - Z_1Z_5 + 0.5Z_2^{2} + (Z_3 + 0.5Z_4) / (1 + Z_1^{2})   + \exp(0.5(Z_3-Z_4)) \times \sin(Z_2 - Z_5 + 1.5Z_3) + 0.1\epsilon,
\end{align*}
where $\epsilon$ is standard Gaussian noise independent of $X$.

\begin{figure} 
        \centering
         \subfloat[]{\includegraphics[width=0.3\linewidth]{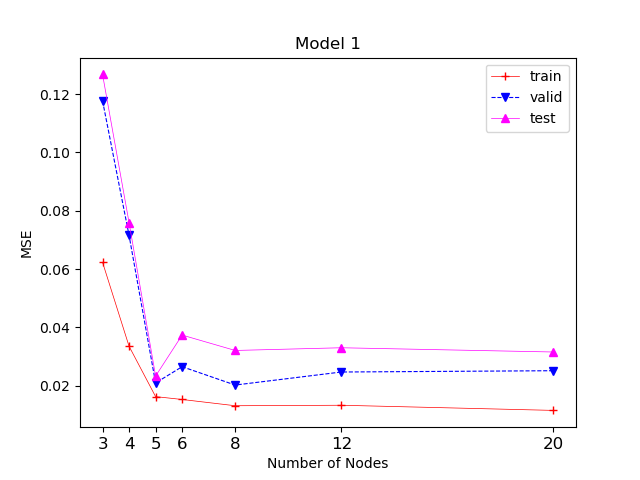}}%
        \hfill
         \subfloat[]{\includegraphics[width=0.3\linewidth]{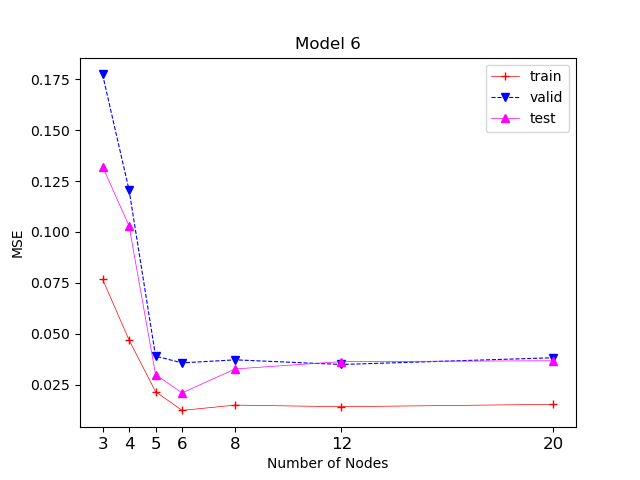}}
        \hfill
         \subfloat[]{\includegraphics[width=0.3\linewidth]{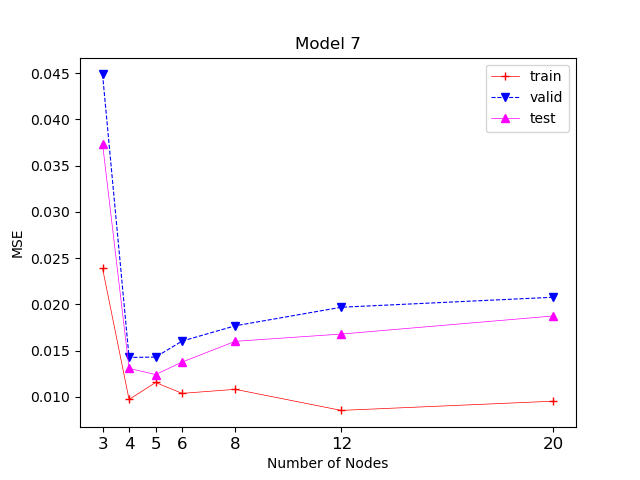}}
        \caption{MSE results comparison for noise level = 0.1}
          \label{figure3.8}
\end{figure}

Results from Fig.~\ref{figure3.8} illustrate the comparison of MSE on the training set, validation set, and test set for Model 1, Model 6, and Model 7 under a noise level of 0.1. As overfitting may occur during training and the test performance is unknown, we can analyze based on the validation set. Fig. 7(b) shows that $MSE_{va}$ is slightly higher when comparing five nodes and six nodes. However, the $MSE_{va}$ increases drastically from five nodes to four nodes. Thus, the difference observed between selecting five nodes and six nodes is the $\delta_N$-approximation, which reduced the number of nodes to five. This shows that even though there are six variables, GRNN-SDR can identify the five important variables from the sixth variable that is of little importance shown in the last term of $Y$ in Model 6. Similarly, Fig. 7(c) shows that $MSE_{va}$ is slightly higher when comparing four nodes and five nodes. However, the $MSE_{va}$ is drastically increased from four nodes to three nodes. The slight difference between selecting four nodes and five nodes is in the $\delta_N$-approximation, which reduced the number of nodes to four. So this shows that the GRNN-SDR method can detect and distinguish the first variable that is of less importance in comparison to the remaining four variables in Model 7 setting.

\section{Conclusion}\label{ch3.6}
We propose a new SDR method, GRNN-SDR, that offers several improvements compared to the existing methods in the literature. First, it uses a systematic incremental search to narrow down the structural dimension by simply adding another hidden layer to the neural network. For Barron classes, the GRNN-SDR method is effective for shallow neural networks as we show both theoretically and empirically. It can also be extended towards deep neural networks for more complex high-dimensional data applications such as imagery and text mining. Second, the algorithmic complexity of GRNN-SDR under a fixed neural network structure is $O(N)$, where $N$ is the sample size, while the best-performing method among the six other methods has an algorithmic complexity of $O(N^3)$. This reduces time and computation complexity significantly, as our experimental results have demonstrated. Third, for the estimation of the central space, GRNN-SDR substantially improves estimation accuracy compared to other methods, as demonstrated by extensive simulation studies using various models, including a widely studied model in the literature. 
In addition, GRNN-SDR identifies the structural dimension, which was presumed in most previous works.

The results of our extensive experiments under multiple different scenarios demonstrate the remarkable improvements provided by our proposed method. It is worth to note how the performance of neural networks can be limited by insufficient training sizes due to the nature of neural networks. For example, Fig.~\ref{figure3.2} shows that when the sample size is 500, the predicted structural dimension is 4 instead of the true structural dimension 5. However, GRNN-SDR successfully identifies the true structural dimension when the sample size increases to 2000. Furthermore, Fig.~\ref{figure3.2} also shows that when the sample size is 1000, the estimation accuracy for GRNN-SDR is $0.434$ and increases to $0.974$ when the sample size is increased to 4000. In contrast, even among the best of the other six methods, AIME has an estimation accuracy of only 0.03 with a sample size of 1000, increasing sharply to 0.761 with a sample size of 4000. In addition, the performances of GRNN-SDR are quite robust in comparison to other statistical methods. It seems fair to say that dimension reduction makes most sense in an approximation case. In this spirit, we extend the proposed method to analyze the $\delta_N$-approximation case, which may be more applicable in practice. Results from Section~\ref{ch3.4.6} show that GRNN-SDR is capable of obtaining a structural dimension under the $\delta_N$-approximation. 

To further validate our method, future research may aim at empirical studies on visual sequence classification, face recognition, causal inference, etc. In addition, more work is needed for the understanding of the practical dimension reduction framework. For instance, what is a proper choice of $\delta_N$ and how should one estimate $\delta_N$ based on the data. Results on these questions may help bridge the gap between the theoretical strengths and real applications.

\section*{Acknowledgments}
This paper is in celebration of Professor Andrew Barron's retirement in 2024, who has had tremendous impacts and inspirations to the authors over years.

\section{Appendix}
\subsection{Proof of Proposition 1}\label{proof: prop1}

\begin{proof}
According to Theorem 1 in \cite{barron1993universal}, if the support set of g(z) $\subseteq \mathcal{B}$, where $\mathcal{B}$ is a closed bounded set, then there exists a neural network of one hidden layer with $m$ nodes, $g_m(z,\bar{\theta}) = \sum_{i=1}^{m}\bar{\tau}_{i}\phi(\bar{u}_{i}^{T}z + \bar{v}_{i}) + \bar{\tau}_0$, where $\phi(\cdot)$ is any sigmoidal function (e.g. tanh) such that $\big\|g(\cdot) - g_m(\cdot,\bar{\theta}) \big\| \leq  \frac{2C_g}{\sqrt{m}}$.

Define a neural network of two hidden layers, $f_{d,m}(x,\bar{\theta},\beta_d)$, where the activation function in the first hidden layer is an identity function, $I(\cdot)$, and 
\begin{align} 
f_{d,m}(x,\bar{\theta},\beta_d) \overset{\Delta}{=}  g_{m}(\beta_d^{T}x, \bar{\theta}) =\sum_{i=1}^{m}\tau_{i}\phi(u_{i}^{T}I(\beta_d^{T}x)+v_{i}) + \tau_0.
\end{align}
Then,
\begin{align}
\big\|f(\cdot)-f_{d,m}(\cdot,\bar{\theta},\beta_d)\big\| &= \big\|g(\beta_d^{T}\cdot) - g_{m}(\beta_d^{T}\cdot, \bar{\theta})\big\| \nonumber \\
&= \big\|g(\cdot) - g_{m}(\cdot, \bar{\theta})\big\|  \nonumber \\
&\leq \frac{2C_g}{\sqrt{m}} .
\end{align}
\end{proof}

\subsection{Proof of Proposition 2}\label{proof: prop2}
\begin{proof}
Let 
\begin{align}
    z &= \beta_d^Tx \in [-M,M]^d, \\
    \tilde{z} &= \frac{1}{2M}(z + M\cdot\vec{1}) \in [0,1]^d, \\
    g(z) &= g(2M\tilde{z} - M\cdot\vec{1}) \\
    &\overset{\Delta}{=} \tilde{g}(\tilde{z}) \in W^{s,\infty}([0,1]^d).
\end{align}
Based on Theorem 5.1 in \cite{de2021approximation}, there exists a tanh neural network, $\hat{\tilde{g}}^N$ of two hidden layers such that 
\begin{align}
    \|\tilde{g} - \hat{\tilde{g}}^N\|_{L^{\infty}([0,1]^d)} \leq (1+\delta)\frac{\mathcal{C}_{d,s,\tilde{g}}}{N^s}
\end{align}
so 
\begin{align}
     \|f - \hat{f}^N\|_{L^{\infty}(B)} &=  \|g(\beta_d^T\cdot) - \hat{g}^N(\beta_d^T\cdot)\|_{L^{\infty}(B)} \nonumber \\
     &= \|g(\cdot) - \hat{g}^N(\cdot)\|_{L^{\infty}([-M,M]^d)} \nonumber \\
     &=  \|\tilde{g} - \hat{\tilde{g}}^N\|_{L^{\infty}([0,1]^d)}  \nonumber \\
     &\leq (1+\delta)\frac{\mathcal{C}_{d,s,\tilde{g}}}{N^s} \nonumber \\
     &= (1+\delta)\frac{(2M)^s\mathcal{C}_{d,s,g}}{N^s},
\end{align}
where $\mathcal{C}_{d,s,g} = \frac{1}{s!}(\frac{3d}{2})^s|g|_{W^{s,\infty}([-M,M]^d)}$.
\end{proof}

\subsection{Proof of Theorem 1}\label{proof: thm1}
The proof for Theorem 1 relies on Lemma~\ref{lem3.1} and Lemma~\ref{lem3.2}. 

\begin{lemma}\label{lem3.1}
If (\ref{eq.5}) $\sim$ (\ref{eq.9}) holds, then for each $(\theta, w_d)$ in the continuous parameter set $\Theta_{d,m,\delta,C}$, there is a $(\theta^*, w_d^*)$ in the discrete set $\Theta_{d,m,\epsilon,\delta,C}$ such that uniformly for $x \in [-1,1]^p$,
\begin{equation}
        |f_{d,m}(x,\theta,w_d) - f_{d,m}(x,\theta^*,w^*_d)| \leq aC\epsilon,
\end{equation}
and hence
\begin{equation}
        \|f_{d,m}(\cdot,\theta,w_d) - f_{d,m}(\cdot,\theta^*,w^*_d)\| \leq aC\epsilon ,
\end{equation}
where $f_{d,m}(x,\theta,w_d)$ is the family of sigmoidal networks of the form given in (\ref{eq.2}).
\end{lemma}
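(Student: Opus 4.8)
The plan is to establish the uniform (sup-norm) bound first and then read off the $L_2(\mu)$ bound for free. With the identity activation in the first layer, both networks have the form $\sum_{i=1}^m \tau_i \phi(\xi_i) + \tau_0$, where $\xi_i = u_i^T w_d^T x + v_i$ and $\xi_i^* = (u_i^*)^T (w_d^*)^T x + v_i^*$. First I would form the difference of the two networks and split each summand through the telescoping identity
\[
\tau_i \phi(\xi_i) - \tau_i^* \phi(\xi_i^*) = \tau_i\bigl(\phi(\xi_i) - \phi(\xi_i^*)\bigr) + (\tau_i - \tau_i^*)\phi(\xi_i^*),
\]
keeping the leftover $\tau_0 - \tau_0^*$ separate. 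The second group is handled with $|\phi| \leq a_0$ and the covering condition $\sum_i |\tau_i - \tau_i^*| \leq C\epsilon$, giving $a_0 C\epsilon$, while the bias gap contributes at most $C\epsilon$; together these account for the $(a_0+1)C\epsilon$ portion of the target constant.

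The crux is the first group, which by the Lipschitz property $|\phi(\xi_i) - \phi(\xi_i^*)| \leq a_1|\xi_i - \xi_i^*|$ and $\sum_i |\tau_i| \leq C$ reduces to a uniform bound on $|\xi_i - \xi_i^*|$. The only nonroutine point is that $\xi_i$ depends \emph{bilinearly} on the pair $(u_i, w_d)$, so both layers perturb simultaneously; I would resolve this with the standard add-and-subtract decomposition
\[
\xi_i - \xi_i^* = u_i^T\bigl(w_d^T - (w_d^*)^T\bigr)x + (u_i - u_i^*)^T (w_d^*)^T x + (v_i - v_i^*).
\]
For the first term I would use $|(w^{(j)} - w^{(j)*})^T x| \leq \|w^{(j)} - w^{(j)*}\|_1 \|x\|_\infty \leq \epsilon$ on $[-1,1]^p$ together with the componentwise bound $|u_{ij}| \leq \|u_i\|_1 \leq \delta$, summed over the $d$ coordinates of the intermediate layer, to obtain $\delta d\,\epsilon$. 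For the second term I would use $|(w^{(j)*})^T x| \leq \|w^{(j)*}\|_1 \leq \delta$ and $\|u_i - u_i^*\|_1 \leq \epsilon$ to get $\delta\epsilon$, and the bias gap contributes $\epsilon$. Summing yields $|\xi_i - \xi_i^*| \leq (\delta d + \delta + 1)\epsilon$ uniformly, so the first group is at most $a_1(\delta d + \delta + 1)C\epsilon$.

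Adding the three contributions gives exactly $\bigl[a_1(\delta d + \delta + 1) + a_0 + 1\bigr]C\epsilon = aC\epsilon$ for every $x \in [-1,1]^p$, which is the pointwise claim. The $L_2(\mu)$ bound then follows with no extra work: since $\mu$ is a probability measure supported on the bounded set, $\|h\| = \bigl(\int |h|^2\,d\mu\bigr)^{1/2} \leq \sup_x |h(x)|$, so the uniform estimate transfers directly to the norm. I expect the main obstacle to be purely bookkeeping: matching the stated constant requires committing to the \emph{loose} estimate $|u_{ij}| \leq \delta$ summed over the $d$ first-layer units (rather than the tighter $\|u_i\|_1 \leq \delta$) in the $w$-perturbation term, which is precisely what produces the factor $d$ in $a_1(\delta d + \delta + 1)$.
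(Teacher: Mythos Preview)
Your proposal is correct and follows essentially the same route as the paper: the same telescoping on the outer weights, the Lipschitz bound on $\phi$, and an add-and-subtract for the bilinear argument $u_i^T w_d^T x$. The only cosmetic difference is that the paper inserts the intermediate point $u_i^{*T} w_d^T x$ (so the $u$-perturbation hits $w_d$ and the $w$-perturbation hits $u_i^*$), whereas you insert $u_i^T (w_d^*)^T x$; both yield the same $(\delta d + \delta + 1)\epsilon$ bound, and your closing remark about needing the loose estimate $|u_{ij}| \leq \delta$ to recover the factor $d$ in the stated constant $a$ is exactly on point.
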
 

\begin{proof}
Let $(\theta, w)$ and $(\theta^*, w^*)$ be parameter vectors in $\Theta_{d,m, \delta, C}$ and $\Theta_{d,m,\epsilon, \delta, C}$ for which (\ref{eq.5}) $\sim$ (\ref{eq.9}) holds, respectively. Consider the difference between the values of the corresponding network functions, 
\begin{align}
    &f_{d,m}(x,\theta,w_d) - f_{d,m}(x,\theta^*,w^*_d) = \nonumber \\
    &\sum^m_{i=1}(\tau_i\phi(u^T_iw^T_dx + v_i) - \tau^*_i\phi(u^{*T}_iw^{*T}_dx + v^*_i)) + (\tau_0 - \tau^*_0).
\end{align}
Then,
\begin{align}
        |f_{d,m}(x,\theta, w_d) - f_{d,m}(x,\theta^*, w^*_d)| 
        &\leq \sum^m_{i=1}|\tau_i|\cdot|\phi(z_i) - \phi(z^*_i)|  + \sum^m_{i=1}|\tau_i - \tau^*_i|\cdot|\phi(z^*_i)| + |\tau_0 - \tau^*_0| \nonumber  \\
        &\leq a_1\sum^m_{i=1}|\tau_i|\cdot|z_i - z^*_i| + a_0\sum^m_{i=1} |\tau_i - \tau^*_i| + |\tau_0 - \tau^*_0|,
\end{align}
where $z_i = u^T_iw^T_dx + v_i$, $z^*_i = u^{*T}_iw^{*T}_dx + v^*_i$. 

\noindent Since
\begin{align}
|z_i - z^*_i| &\leq |(u^T_iw^T_d - u^{*T}_iw^{*T}_d)x| + |v_i - v^*_i|  \nonumber \\
&\leq \|u^T_iw^T_d - u^{*T}_iw^{*T}_d\|_1 + \epsilon  \hspace{0.6cm} \nonumber \\ 
&\leq \|(u^T_i - u^{*T}_i)w^{T}_d\|_1 + \|u^{*}_i(w^T_d - w^{*T}_d)\|_1 + \epsilon  \nonumber \\
&\leq\delta \|u_i - u^*_i\|_1 + \delta\sum^d_{j=1}\|w^{(j)} - w^{(j)*}\|_1 + \epsilon \nonumber  \\
&\leq\delta\epsilon + \delta d \epsilon + \epsilon,
\end{align}
we have
\begin{align}
    |f_{d,m}(x,\theta, w_d) - f_{d,m}(x,\theta^*, w^*_d)| &\leq a_1(\delta(d+1)+1)\epsilon C + a_0C\epsilon + C\epsilon \nonumber  \\
    &= (a_1(\delta(d+1)+1) +a_0 + 1)C\epsilon  \nonumber  \\
    &= aC\epsilon.
\end{align}
\end{proof}

\begin{lemma}\label{lem3.2}
For each $\delta > 0$, $\epsilon >0$, and $C \geq 1$, there is a set $\Theta_{d,m,\epsilon,\delta,C}$ that satisfies (\ref{eq.5}) $\sim$ (\ref{eq.9}) and has cardinality bounded by

\begin{align}
|\Theta_{d,m,\epsilon, \delta, C}| &\leq (\frac{2e(\delta + \epsilon)}{\epsilon})^{dp} \cdot (\frac{2e(\delta + \epsilon)}{\epsilon})^{md} \cdot 
(\frac{2(\delta + \epsilon)}{\epsilon})^{m} \cdot (\frac{2e(1 + \epsilon)}{\epsilon})^{m}\cdot(\frac{v + 2 + \epsilon}{\epsilon}),
\end{align}
where $v = \max\{a_0, a_1\}$ and $e$ is the Euler's number.
\end{lemma}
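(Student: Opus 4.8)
The plan is to build the discrete set $\Theta_{d,m,\epsilon,\delta,C}$ as a Cartesian product of independent $\epsilon$-nets, one for each of the five parameter blocks $\{w^{(j)}\}_{j=1}^d$, $\{u_i\}_{i=1}^m$, $\{v_i\}_{i=1}^m$, $(\tau_1,\ldots,\tau_m)$, and $\tau_0$. The key observation is that the covering requirements (\ref{eq.5})--(\ref{eq.9}) decouple across these blocks: (\ref{eq.5}) is a separate $\ell_1$ condition on each $u_i$, (\ref{eq.6}) on each $w^{(j)}$, (\ref{eq.7}) on each scalar $v_i$, while (\ref{eq.8}) is a single \emph{joint} $\ell_1$ condition on the whole vector $(\tau_1,\ldots,\tau_m)$ and (\ref{eq.9}) bounds $\tau_0$. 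Consequently, if for each block we exhibit a net meeting its own condition, the product of these nets meets all of (\ref{eq.5})--(\ref{eq.9}) simultaneously, and its cardinality is the product of the individual net sizes. So the lemma reduces to bounding five covering numbers and multiplying.

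First I would handle the $\ell_1$-ball blocks. Each $u_i$ ranges over the $\ell_1$-ball of radius $\delta$ in $\mathbb{R}^d$ and must be matched to within $\ell_1$-distance $\epsilon$; each $w^{(j)}$ over the $\ell_1$-ball of radius $\delta$ in $\mathbb{R}^p$ at the same tolerance; and the vector $(\tau_1,\ldots,\tau_m)$ over the $\ell_1$-ball of radius $C$ in $\mathbb{R}^m$ at tolerance $C\epsilon$. For each I would invoke a standard covering-number bound for an $\ell_1$-ball of radius $R$ in $\mathbb{R}^n$ at scale $\eta$ of the form $N \leq \left(\tfrac{2e(R+\eta)}{\eta}\right)^{n}$, where the factor $2e$ arises from a Stirling-type estimate of the relevant multinomial (equivalently, a volumetric comparison of enlarged $\ell_1$-balls). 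Applying this with $(R,\eta)=(\delta,\epsilon)$ over the $d$ blocks $w^{(j)}$ and the $m$ blocks $u_i$ yields the factors $\left(\tfrac{2e(\delta+\epsilon)}{\epsilon}\right)^{dp}$ and $\left(\tfrac{2e(\delta+\epsilon)}{\epsilon}\right)^{md}$; applying it once with $(R,\eta)=(C,C\epsilon)$ to the $\tau$-vector is scale invariant, so $R/\eta=1/\epsilon$ and $(R+\eta)/\eta=(1+\epsilon)/\epsilon$, contributing $\left(\tfrac{2e(1+\epsilon)}{\epsilon}\right)^{m}$.

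For the scalar blocks I would use one-dimensional grids. Each $v_i\in[-\delta,\delta]$ covered at spacing $\epsilon$ needs at most $2\delta/\epsilon+2=\tfrac{2(\delta+\epsilon)}{\epsilon}$ points (no $e$ appears in dimension one), giving $\left(\tfrac{2(\delta+\epsilon)}{\epsilon}\right)^{m}$ over the $m$ biases; and $\tau_0$, which ranges over a bounded interval and must be matched within $C\epsilon$, contributes the last factor $\tfrac{v+2+\epsilon}{\epsilon}$, the appearance of $v=\max\{a_0,a_1\}$ reflecting the bound on the admissible range of the output bias in terms of the boundedness and Lipschitz constants of $\phi$. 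Multiplying the five bounds gives exactly the claimed cardinality.

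The main obstacle is the $\ell_1$-ball block, in two respects. First, one must extract the sharp constant $2e$: a naive coordinatewise grid forces the tolerance to shrink like $\eta/n$ and loses a factor $n^n$, so the argument has to exploit the $\ell_1$ geometry directly (a sparse or rational representation of ball points, or a volumetric enlargement argument), and verifying that the Stirling estimate delivers precisely $2e(R+\eta)/\eta$ is the delicate bookkeeping. Second, condition (\ref{eq.8}) constrains the $\tau$'s only through a single aggregate $\ell_1$ norm, so they must be covered as one $m$-dimensional ball rather than $m$ separate scalars; confirming that this joint net, after the $C$-scaling, is what produces the $(1+\epsilon)$ base is the step most easily gotten wrong. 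The scalar factors and the final multiplication are then routine.
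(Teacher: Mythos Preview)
Your proposal is correct and matches the paper's approach: the same Cartesian-product decomposition into the five parameter blocks, the same $\ell_1$-ball covering bound $(2e(R+\eta)/\eta)^n$ via a volumetric/Stirling argument applied separately to each $u_i$, each $w^{(j)}$, and to the joint vector $(\tau_1,\ldots,\tau_m)$, and the same one-dimensional grids for the $v_i$ and $\tau_0$. One minor clarification on your closing remark: the paper's construction \emph{is} a coordinatewise grid at spacing $\eta/n$ intersected with the simplex---the volumetric comparison is how one counts those grid cubes, not an alternative to the grid---so the $n^n$ you worry about is precisely what Stirling's inequality $n!\geq(n/e)^n$ absorbs.
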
 

\begin{proof}
The proof is similar to \cite{barron1994approximation}. First, consider a rectangular grid for the coordinates of $w^{(j)} \in \mathbb{R}^p$, where $j = 1, \cdots, d$ spaced at width $\epsilon / p$, spaced at width $\epsilon / d$ for the coordinates of $u_i$, spaced at width $\epsilon$ for $v_i$ spaced at width $C\epsilon$ for $\tau_0$, and spaced at width $C\epsilon/m$ for $\tau_i$, for $i = 1, \cdots, m$. Intersecting the grid with $\Theta_{d,m, \delta, C}$ allows the set $\Theta_{d,m,\epsilon, \delta, C}$ to satisfy the requirements in (\ref{eq.5}) $\sim$ (\ref{eq.9}). Since $\Theta_{d,m, \delta, C}$ is a cartesian product of constraint sets for the $u$'s and $v$'s and $\tau$'s, so the desired cardinality is a product of the corresponding counts. 

Then, bound the number of grid points in the simplex $S_\delta = \{u \in \mathbb{R}^d: \|u\|_1 \leq \delta \}$, where the grid points are spaced at width $\epsilon/d$ in each coordinate. The union of the small cubes that intersect $S_\delta$ is contained in $S_{\delta + \epsilon}$. (Any point $u$ in this union has $l_1$ distance less than $\epsilon$ from a point $u'$ in $S_{\delta}$ for $\|u\|_1 \leq \|u'\|_1 + \|u - u'\|_1 \leq \delta + \epsilon$, so $u$ is in $S_{\delta + \epsilon}$.) The volume of this union of cubes is the number of cubes times the volume of a cube, i.e., $(\epsilon/d)^d$. Also, the volume of the covering simplex is $(2(\delta+\epsilon))^d/d!$. So the number of cubes that intersect $S_\delta$ is no greater than 
\begin{align}
\frac{(2(\delta+\epsilon))^d/d!}{(\epsilon/d)^d} = (2d(\delta+\epsilon)/\epsilon)^d/d! 
   \leq (2e(\delta + \epsilon)/\epsilon)^d.
\end{align}

For $m$ such parameter vectors $u_i$, the total count is bounded by $(2e(\delta + \epsilon)/\epsilon)^{md}$. Similarly, the total count for the $v_i$'s is bounded by $(2(\delta + \epsilon)/\epsilon)^{m}$, the total count for $\tau_i$'s is bounded by $(2e(1+\epsilon)/\epsilon)^m$, and the total count for $\tau_0$ is bounded by $(v + 2C + C\epsilon)/(C\epsilon) \leq (v + 2 + \epsilon)/\epsilon$ for $C \geq 1$. Let $U_\delta = \{w^{(j)} \in \mathbb{R}^p: \|w^{(j)}\|_1 \leq \delta\}$. Then, the number of cubes that intersect $U_\delta$ is no greater than $(2p(\delta + \epsilon)/\epsilon)^p/p! \leq (2e(\delta + \epsilon)/\epsilon)^p$. For $d$ such parameter vectors $w^{(j)}$, the total count is bounded by $(2e(\delta + \epsilon)/\epsilon)^{dp}$. 

Finally, taking the product of the counts for $w^{(j)}, u_i, v_i, \tau_i$, and $\tau_0$ gives the cardinality
\begin{align}
    |\Theta_{d,m,\epsilon, \delta, C}| \leq (\frac{2e(\delta + \epsilon)}{\epsilon})^{dp} \cdot (\frac{2e(\delta + \epsilon)}{\epsilon})^{md} \cdot
    (\frac{2(\delta + \epsilon)}{\epsilon})^{m} \cdot (\frac{2e(1 + \epsilon)}{\epsilon})^{m}\cdot(\frac{v + 2 + \epsilon}{\epsilon}). 
\end{align}

\end{proof}

\noindent Now we prove Theorem 1. 
\begin{proof}
Choose large enough $\delta_0 >1$. For $\delta \geq \delta_0$, $0 < \epsilon \leq 1$, we have 
\begin{equation}
    \log |\Theta_{d,m,\epsilon, \delta, C}| \leq Q\log\frac{2e(1 + \delta)}{\epsilon},
\end{equation}
where
\begin{align}
Q &\overset{\Delta}{=} dp + md + 2m + 1  \\
&= (p+m)d + 2m + 1.
\end{align}
Define $R_{d,m,N}(f) \overset{\Delta}{=} \min_{(\theta,w) \in \Theta_{d,m,\epsilon, \delta, C}}\|f(\cdot) - f_{d,m}(\cdot, \theta, w)\|^2 + \frac{\lambda}{N}\log |\Theta_{d,m,\epsilon, \delta, C}|$, where $\lambda$ where $\lambda$ is a constant greater than 1. Then,
\begin{align}
R_{d,m,N}(f) &\leq \|f - f_{d,m}\|^2 + \frac{\lambda}{N}\log |\Theta_{d,m,\epsilon, \delta, C}| \nonumber \\
&\leq 2(\|f(\cdot) - f_{d,m}(\cdot, \bar{\theta}, \bar{w})\|^2  + \|f_{d,m}(\cdot,\bar{\theta}, \bar{w}) 
- f_{d,m}(\cdot,\theta^*,\hat{w}^*_{d})\|^2)  + \frac{\lambda}{N}\log|\Theta_{d,m,\epsilon,\delta,C}| \nonumber \\
&\leq 2(\frac{{4C_g}^2}{m}) + 2(aC \epsilon)^2  + \frac{\lambda}{N}Q\log\frac{2e(1+ \delta)}{\epsilon}.
\label{eq31}
\end{align}

\noindent Note that in (\ref{eq31}), the first term comes from proof for Proposition 1, the second term comes from Lemma~\ref{lem3.1}, and the third term comes from Lemma~\ref{lem3.2}. 

\noindent By taking the partial derivative with respect to $\epsilon$, we have 
\begin{align}
    4a^2C^2\epsilon - \frac{\lambda Q}{N}\frac{1}{\epsilon} &= 0 \\
 \Rightarrow \epsilon &= \frac{1}{2aC}(\frac{\lambda Q}{N})^{1/2} \sim \left(\frac{(m+p)d}{N}\right)^{1/2},
\end{align}
which optimizes the bound. Therefore,
\begin{align}
R_{d,m,N}(f) &\leq \frac{2(4{C_g}^2)}{m} + 2aC\frac{1}{4a^2C^2}\frac{\lambda Q}{N} +\frac{\lambda Q}{N}\log\left(\frac{2e(\delta + 1)}{2aC}\sqrt{\frac{N}{\lambda Q}}\right) \nonumber \\ 
&= O\left(\frac{C_g^2}{m}\right) + O\left(\frac{Q}{N}\log \frac{N}{Q}\right) \nonumber \\
&= O\left(\frac{C_g^2}{m}\right) + O\left(\frac{(m+p)d}{N}\log(\frac{N}{(m+p)d})\right).
\end{align}

Thus, based on the theorem convergence for complexity regularization in \cite{barron1994approximation},
\begin{align}
E\big\|f(\cdot) - f_{d,m}(\cdot,\hat{\theta}_{d,m,N},\hat{w}_{d,m,N})\big\|^2 &\leq O\left(R_{d,m,N}(f)\right)  \nonumber \\
&\leq O\left(\frac{C_g^2}{m}\right) + O\left(\frac{(m+p)d}{N}\log(\frac{N}{(m+p)d})\right).
\end{align}
\end{proof}

\subsection{Proof of Corollary 1}\label{proof: cor1}
\begin{proof}
Let $\tilde{g}(w^Tx) = f_{k,m}(x,\hat{\theta}_{k,N},\hat{w}_{k,N})$, where $ w = \hat{w}_{k,N}$. Based on \textbf{Assumption 2}, 
\begin{equation}
    \|f(\cdot) - \tilde{g}(w^T\cdot)\|^2 \geq c^2.
\end{equation}
Thus, $E\big\|f(\cdot) - f_{k,m}(\cdot,\hat{\theta}_{k,N},\hat{w}_{k,N})\big\|^2  \geq c^2$ holds when $k<d$. 

\noindent The proof for the case when $k \geq d$ is similar as in Theorem 1.
\end{proof}

\subsection{Proof of Theorem 2}\label{proof: thm2}
\begin{proof}
Note that it is suffice to show:
\begin{enumerate}
    \item If $k<d$, $P(CR(k)>CR(d)) \rightarrow 1$
    \item If $k>d$, $P(CR(k)>CR(d)) \rightarrow 1$
\end{enumerate}
Observe that, 
\begin{align}
    CR(k) - CR(d) &= \frac{1}{n_{va}}\sum^{n_{va}}_{i=1}(Y_i - \hat{f}_k(X_i))^2 + k*pen(N, n_{va})  -\frac{1}{n_{va}}\sum^{n_{va}}_{i=1}(Y_i - \hat{f}_d(X_i))^2 + d*pen(N, n_{va}) \nonumber \\
    &=\frac{1}{n_{va}}(\sum^{n_{va}}_{i=1}(f(X_i) - \hat{f}_k(X_i))^2 -\sum^{n_{va}}_{i=1}(f(X_i) - \hat{f}_d(X_i))^2 ) \nonumber \\
    &+ \frac{2}{n_{va}}\sum^{n_{va}}_{i=1}\epsilon_i(\hat{f}_k(X_i)-\hat{f}_d(X_i)) + (k-d)*pen(N, n_{va}) \nonumber \\ 
    &\overset{\Delta}{=} I + II + (k-d)*pen(N, n_{va}), \label{eq3.58}
\end{align}
where 
\begin{align*}
    I &\overset{\Delta}{=}\frac{1}{n_{va}}(\sum^{n_{va}}_{i=1}(f(X_i) - \hat{f}_k(X_i))^2 -\sum^{n_{va}}_{i=1}(f(X_i) - \hat{f}_d(X_i))^2 ), \\
    II &\overset{\Delta}{=} \frac{2}{n_{va}}\sum^{n_{va}}_{i=1}\epsilon_i(\hat{f}_k(X_i)-\hat{f}_d(X_i)).
\end{align*} 

Based on the results in Corollary 1, we know when $k<d$, $I$ is bounded away from 0 in probability and $II$ is of order $O_p(\frac{1}{\sqrt{n_{va}}})$. Then, because of the choice of the penalty such that $pen(N, n_{va}) \rightarrow 0$ as $N \rightarrow \infty$ and $n_{va} \rightarrow \infty$, we conclude that $P(CR(k) > CR(d)) \rightarrow 1$. 

When $k>d$, $I$ is of order $O_p(\frac{1}{N}\log{N})^{\frac{1}{2}}$ and $II$ is again of order $O_p(\frac{1}{\sqrt{n_{va}}})$. Since $(k-d)*pen(N, n_{va}) = (k-d)((\frac{1}{N}\log{N})^{\frac{1}{2}}+\frac{1}{\sqrt{n_{va}}})a_{N,n_{va}}$ with $a_{N,n_{va}} \rightarrow \infty$, we know $I + II$ is dominated by the penalty term. Thus, $P(CR(k) > CR(d)) \rightarrow 1$.

This completes the proof of the theorem.
\end{proof}

\subsection{Proof for Proposition 3}\label{proof: prop3}
\begin{proof}
\begin{enumerate}
    \item According to Theorem 1 in \cite{barron1993universal}, if the support set of g(z) $\subseteq \mathcal{B}$, then there exists a neural network of one hidden layer with $m$ nodes, $g_m(z) = g_m(z,\bar{\theta}) = \sum_{i=1}^{m}\bar{\tau}_{i}\phi(\bar{u}_{i}^{T}z + \bar{v}_{i}) + \bar{\tau}_0$, where $\phi(\cdot)$ is any sigmoidal function (e.g. tanh) such that $\big\|g(\cdot) - g_m(\cdot,\bar{\theta}) \big\| \leq  \frac{2C_g}{\sqrt{m}}$.

Define a neural network of two hidden layers, $f_{d,m}(x,\bar{\theta},\beta_d)$, where the activation function of the first hidden layer is an identity function, $I(\cdot)$, and
\begin{align} 
f_{d,m}(x,\bar{\theta},\beta_d) \overset{\Delta}{=}  g_{m}(\beta_d^{T}x, \bar{\theta}) =\sum_{i=1}^{m}\tau_{i}\phi(u_{i}^{T}I(\beta_d^{T}x)+v_{i}) + \tau_0.
\end{align}
Then,
\begin{align}
\big\|f(\cdot)-f_{d,m}(\cdot,\bar{\theta},\beta_d)\big\| &\leq \big\|f(\cdot) - g(\beta_d^T\cdot)\big\| + \big\|g(\beta_d^T\cdot) - g_{m}(\beta_d^{T}\cdot, \bar{\theta})\big\| \nonumber \\
&< \delta_N + \frac{2C_g}{\sqrt{m}} .
\end{align}

\item For any $m, \theta$, and $\alpha_k$ where $k \leq d-1$, if $k <d-1$, define $\alpha = (\alpha_k, \textbf{0}) \in \mathbb{R}^{p \times (d-1)}$, where $\textbf{0} \in \mathbb{R}^{p \times (d-1-k)}$ is a zero matrix, else $\alpha = \alpha_k$. Let 
\begin{align}
    \tilde{g}(\alpha^Tx) =  f_{k,m}(x,\theta,\alpha_k),
\end{align}
then according to \textbf{Assumption 4}, 
\begin{align}
    \big\|f(\cdot)-f_{k,m}(\cdot,\theta,\alpha_k)\big\| = \big\|f(\cdot)- \tilde{g}(\alpha^T\cdot)\big\| 
    \geq c_N.
\end{align}
\end{enumerate}
\end{proof}

\bibliography{SDRarXiv}
\end{document}